\definecolor{mydarkblue}{rgb}{0,0.08,0.45}
\theoremstyle{plain}
\newtheorem{theorem}{Theorem}[section]
\newtheorem{lemma}[theorem]{Lemma}
\newtheorem{definition}[theorem]{Definition}
\newtheorem{assumption}[theorem]{Assumption}
\newcommand{\E}{\ensuremath{\mathbf{E}}}
\newcommand{\T}{\ensuremath{\mathbf{\top}}}
\newcommand{\var}{\ensuremath{\mathrm{var}}}
\newcommand{\tr}{\ensuremath{\mathrm{tr}}}
\newcommand{\norm}[2]{\ensuremath{{\| #1 \|}_{#2}}}
\newcommand{\R}{\mathbb{R}}
\renewcommand{\cal}[1]{\ensuremath{\mathcal{#1}}}
\newcommand{\eps}{\varepsilon}
\newcommand{\inprod}[2]{\ensuremath{\langle {#1}, {#2} \rangle}}
\newcommand{\sqloss}[2]{\ensuremath{\E\left[\left(\inprod{#1}{#2} - y\right)^2\right]}}
\newcommand{\bE}{\mathbb{E}}
\newcommand{\bP}{\mathbb{P}}
\newcommand{\bR}{\mathbb{R}}
\newcommand{\cN}{\mathcal{N}}
\newcommand{\cS}{\mathcal{S}}
\newcommand{\cO}{\mathcal{O}}
\newcommand{\Prob}{\mathbb{P}}
\newcommand{\eqdef}{\mathrel{\mathop=}:}
\newcommand{\paren}[1]{\left( #1 \right)}
\newcommand{\brck}[1]{\left [ #1 \right ] }
\newcommand{\cbrck}[1]{\left \{ #1 \right \} } %
\newcommand{\indep}{\rotatebox[origin=c]{90}{$\models$}}
\newcommand{\twostgalg}{\mathcal{M}_{\mathrm{2stg}}(\epsilon, \delta, \gamma)}
\newcommand{\twostgprb}{\mathcal{P}_{\mathrm{2stg}}(d, k, \gamma)}
\newcommand{\twostgprbsc}{\mathcal{P}_{\mathrm{2stg}}(\rho)}
\newcommand{\Bpub}{\hat{B}}
\newcommand{\lsim}{\raisebox{-0.13cm}{~\shortstack{$<$ \\[-0.07cm]
      $\sim$}}~}
\title{On the Benefits of Public Representations for Private Transfer Learning under Distribution Shift}
\author{Pratiksha Thaker \\
	Carnegie Mellon University\\
	\texttt{pthaker@andrew.cmu.edu} \\
	\And
	{Amrith Setlur} \\
         Carnegie Mellon University\\
	\texttt{asetlur@andrew.cmu.edu}
 	\AND
	{Zhiwei Steven Wu} \\
         Carnegie Mellon University\\
	\texttt{zstevenwu@andrew.cmu.edu} \\
        \And
	{Virginia Smith} \\
         Carnegie Mellon University\\
	\texttt{smithv@andrew.cmu.edu} \\
}
\begin{document}

\maketitle

\begin{abstract}
Public pretraining is a promising approach to improve differentially private model training.
However, recent work has noted that many positive research results studying this paradigm only consider in-distribution tasks, and may not apply to settings where there is distribution shift between the pretraining and finetuning data---a scenario that is likely when finetuning private tasks due to the sensitive nature of the data.
In this work, we show empirically across three tasks that even in settings with large distribution shift, 
where both zero-shot performance from public data and training from scratch with private data give unusably weak results,
public features can in fact improve private training accuracy by up to 67\% over private training from scratch.
We provide a theoretical explanation for this phenomenon,
showing that if the public and private data share a low-dimensional representation,
public representations can improve the sample complexity of private training even if it is \emph{impossible} to learn the private task from the public data alone.
Altogether, our results provide evidence that public data can indeed make private training practical in realistic settings of extreme distribution shift.

\end{abstract}

\section{Introduction}

Learning models from user data can potentially disclose sensitive  user information, violating privacy constraints~\citep{fredrikson2015model,shokri2017membership,carlini2021extracting}.
Differential privacy is a standard framework that can be used when
 learning models from sensitive data to mitigate the risk of leaking private information~\citep{dwork2006calibrating}.
However, differentially private learning may significantly degrade accuracy, 
which remains a barrier to adoption~\citep{cummings2023challenges}.
This has motivated recent works to explore the benefits of
incorporating publicly available data into private training,
e.g., by pretraining a model on public data and then finetuning it
using private data.
Empirically, this paradigm has been shown to substantially improve performance on
private tasks relative to fully-private training ~\citep{golatkar2022mixed, luo2021scalable, kurakin2022toward, yu2021differentially, he2022exploring, bu2023differentially, ginart2022submix, ZhouW021}.

While these results are encouraging, ~\citet{tramer2022considerations}
point out that much of the existing work focuses on \emph{in-distribution} tasks, 
where the public and private tasks are very similar. %
For example, many private vision models \citep{abadi2016deep, papernot2019making, tramer2020differentially, de2022unlocking, ke2024convergence} use public features pretrained on ImageNet~\citep{deng2009imagenet}, CIFAR-10 or CIFAR-100 ~\citep{krizhevsky2009learning}, 
but these works also simulate private \emph{transfer} performance by finetuning on one of these datasets. %
In fact, \citet{tramer2022considerations} point out that ``\emph{every single} class contained in the CIFAR-10 dataset has an identical class label in the ImageNet dataset!''
This is particularly problematic when attempting to understand the utility of public pretraining for private tasks, because in practice the private task is likely to contain sensitive data that is \textit{not} perfectly represented by public data, such as in applications in medicine~\citep{pham2023combined} or law~\citep{hendrycks2021cuad}.
Indeed, if data is already well-represented in a public dataset, 
the \emph{zero-shot} performance of a model trained only on public data should be good enough that no private ``transfer'' learning is required,
potentially making these benchmark datasets uninformative for evaluating the benefits of transfer learning.

From a practical perspective, it is particularly important to understand
transfer learning in the private setting:
if a \emph{non}-privacy-sensitive task is poorly represented
by the pretrained features,
one solution might be to simply add the data from that task into the public training dataset and learn a more general set of features for downstream use.
But privacy-sensitive data cannot be used to train a public backbone, and individual private datasets often cannot be combined or shared. 
Thus, the ability to leverage public features to improve the sample dependence of private learning is critical.

\vspace{-.1in}
\paragraph{Our contributions.}
In this work, we provide evidence to alleviate these concerns, showing theoretically and empirically
that public pretraining can be helpful even 
in settings with realistic and possibly extreme distribution shift
between public (training) and private (transfer) tasks. In particular, we focus on concept shift, where the conditional distributions $P(Y\mid X)$ can vary drastically between public and private tasks. Our results are summarized as follows. 

First, we conduct empirical case studies\footnote{Code will be made available at \url{https://github.com/pratiksha/private-transfer}.}
 on three datasets to show
that public features improve private training accuracy even under 
extreme distribution shift. 
In particular, we use a pretrained CLIP ViT-B vision model for public features and measure the accuracy of private transfer learning on datasets including the PatchCamelyon (PCam) ~\cite{pcam}, 
Functional Map of the World (fMoW)~\citep{fmow},
and Remote Sensing Image Scene Classification (RESISC45)~\citep{cheng2017remote}. 
On all three datasets, the pretrained model has unacceptably low zero-shot
accuracy (random guessing on both PCam and fMoW),
indicating that ``perfect privacy'' with zero-shot queries is likely hopeless.
In comparison, on CIFAR-10, the CLIP ViT-B/32 model achieves 91.3\%
zero-shot accuracy~\citep{clip}, 
making transfer learning performance far less relevant as the zero-shot accuracy is already high.
We observe that across all datasets, 
private finetuning and linear probing using public features outperform differentially training from scratch -- by up to 67\%.
In addition, private linear probing consistently outperforms private finetuning.

Motivated by our empirical results, we provide a stylized theoretical model to understand and explain our findings.
We study a simple linear transfer learning model,
a common theoretical model in the non-private meta-learning 
literature \citep{tripuraneni2021provable, DuHKLL21, jiang2022subspace, saunshi2021representation, collins2020does, knight2024multi, kumar2022fine},
to show the statistical benefit of learning a shared,
low-dimensional \emph{representation}
(in our model, a low-rank linear subspace) using public data. Our transfer learning model captures an extreme form of concept shift in the sense that the target model on private data is entirely different from those on public data, even though they are all contained in the same subspace.
Analogous to the paradigm of public pre-training then private linear probing, we analyze a simple two-stage algorithm that (1) first estimates the shared, low-dimensional representation (or subspace) from a diverse set of tasks in public data, and (2) performs private linear regression within the learned subspace. By leveraging the dimensionality reduction, we provide a better sample complexity that scales with the rank of the shared subspace instead of the ambient dimension of the features. To complement this sample complexity bound, we also show a novel lower bound that shows that our bound is tight among algorithms that search for 
regression parameters within a fixed low-rank subspace estimate.

In short, our findings provide optimistic insights regarding the concerns raised by \citet{tramer2022considerations}. Specifically, \citet{tramer2022considerations} suggest that “current methods for large-scale pretraining may be less effective.” In contrast, our results indicate that pretrained features can indeed benefit private learning, even under concept shift. Additionally, our findings address another concern from \citet{tramer2022considerations} regarding the necessity of uploading private data to cloud services for finetuning large models due to high resource requirements. We demonstrate that training a linear probe privately is more effective, potentially requiring significantly fewer resources (both memory and computation) than finetuning a full model.

\section{Related Work}
\label{sec:related}

\paragraph{Empirical studies of public pretraining for private learning.} 

As~\citet{tramer2022considerations} point out, existing empirical studies on public pretraining for private learning largely focus on
transfer between similar datasets.
For example, \citep{abadi2016deep, papernot2019making, tramer2020differentially, de2022unlocking, ke2024convergence, ganesh2023public} pretrain on CIFAR-100 or ImageNet and finetune on CIFAR-10 or STL-10 (a dataset very similar to CIFAR-10).
\citep{kurakin2022toward} pretrains on Places365 and finetunes on ImageNet.
\citep{de2022unlocking, mehta2022large} pretrain on JFT and finetune on ImageNet.
Finally, \citep{yu2021large, li2021large, yu2021differentially, arora2022can} pretrain and finetune on publicly available text on the Web.

All of these works build evidence that pretraining could be beneficial for private learning. 
Unfortunately, because the public and private tasks are so similar,
these results are unlikely to be representative of real-world private training
in which the private task requires learning a model on sensitive data with a very different distribution from data available on the Web.

Recent work ~\citep{pinto2023pillar} evaluates their algorithm on private learning tasks that are out-of-distribution 
for the feature extractor they use, including the PCam dataset that we also study.
However, their algorithm requires access to 
(nearly) in-distribution \emph{public} data in order to learn a 
projection matrix into a low-dimensional space.
We argue that this is a strong and unrealistic assumption considering the arguments put forth in~\citet{tramer2022considerations} 
that private data, because of its sensitive nature, will not be well-represented by public datasets.
Our work instead focuses on understanding the improvements from using off-the-shelf feature extractors, with no in-distribution public data, 
over fully-private learning.

\vspace{-.1in}
\paragraph{Transfer or meta-learning.} 
Our results build on 
the framework of ~\citet{tripuraneni2021provable} for
nonprivate transfer learning with a low-dimensional subspace.
This linear, low-dimensional subspace assumption has been studied extensively in the nonprivate meta-learning literature
as a tractable model for real shared representation learning \citep{tripuraneni2021provable, DuHKLL21, jiang2022subspace, saunshi2021representation, collins2020does, knight2024multi, kumar2022fine}.
However, none of these works consider the setting of 
public subspace estimation followed by private transfer learning.
PILLAR~\citep{pinto2023pillar} makes a shared subspace assumption in the private setting,
but on the input features rather than on the models.

\vspace{-.1in}
\paragraph{Private algorithms that leverage public data.}

A number of prior works have theoretically studied the benefits of public data in other settings, including mean estimation~\citep{avent2020power}, query release~\citep{liu2021leveraging, bassily2020private, fuentes2024joint}, 
and optimization when \emph{gradients} lie in a low-rank subspace~\citep{kairouz2021nearly, amid2022public, yu2020not}.
~\citet{kairouz2021nearly} in particular gives a similar analysis using the principal angle error of the subspace, but the analysis does not apply directly as we assume that models, rather than gradients, lie in a shared low-dimensional subspace. As a result, the algorithm in that work requires expensive subspace oracle calls on every iteration and would be computationally suboptimal in our setting.

Finally, as discussed earlier, pretraining has empirically been shown to be useful in a number of
domains, including vision~\citep{golatkar2022mixed, luo2021scalable, kurakin2022toward} and NLP~\citep{yu2021differentially, he2022exploring, bu2023differentially, ginart2022submix}.
While our work does not model the complexities of neural networks,
we can understand our results as a stylized version of finetuning in 
which the public network is tuned with linear regression on the last layer,
potentially giving insight into these more complex models.

\vspace{-.1in}
\paragraph{Theoretical analyses of pretraining for private learning.} 
~\citet{ganesh2023public} provides a lower bound construction for a related setting
in which public data is abundant and the private task is out of distribution, though does not consider the case where the public and private
task explicitly share structure. 
In our setting, learning from  the public data alone provides no guarantees on the transfer task, 
as we do not assume any bounded shift in the data distributions or target parameters between the public tasks to the private tasks; the key information enabling more efficient learning is the shared structure among the tasks.
PILLAR~\citep{pinto2023pillar} incorporates public pretraining, but their analysis focuses on the benefits of dimensionality reduction using in-distribution public data, rather than transfer from out-of-distribution public data.
Finally, ~\citet{ke2024convergence} study the tradeoffs between linear probing and finetuning in the private setting.
While their empirical results focus on the in-distribution image recognition settings outlined previously, 
their theoretical results corroborate our findings that even under extreme distribution shift, 
linear probing is more effective than finetuning under differential privacy.

\section{Preliminaries}

\paragraph{Notation.} Throughout the paper, we use lower-case $v$ for vectors, upper-case $V$ for matrices and calligraphic $\mathcal{V}$ for sets. 
Generally, we use the ``hatted'' notation $\hat B$, $\hat{\alpha}$ to refer to estimates of the underlying population variables. 
The use of $\cO, \Omega, \Theta$ is standard and $\tilde{\cO}, \,\tilde{\Omega}$ hides $\mathrm{polylog}$ factors in quantities we specify separately. We use $\|\cdot\|_F$ for Frobenius, $\|\cdot\|_\mathrm{op}$ for operator and $\|\cdot\|_p$ for $\ell_p$ norms.

\subsection{Differential Privacy}
\label{subsec:dp}

Differential privacy (DP) is a quantitative constraint on the 
information gained from a released statistic~\citep{dp-survey}. 
Definition~\ref{def:dp} restates the standard $(\eps, \delta)$-differential privacy  introduced in ~\cite{dwork2006calibrating}. 

\begin{definition}[ $(\epsilon,\delta)$-differential privacy  \citep{dwork2006calibrating}]\label{def:dp} Given $\epsilon\ge0$, $\delta\in[0,1]$ and a neighboring relation $\sim$,
a randomized mechanism $M:\cal{X}^n~\rightarrow~\mathcal{Y}$ from the set of datasets of size $n$ to an output space $\mathcal{Y}$ is $(\epsilon,\delta)$-\emph{differentially private} if for all neighboring datasets $\cS\sim \cS' \subseteq \cal{X}$, and all events $E\subseteq\mathcal{Y}$,
\begin{align*} 
    &
    \Pr[\mathcal{M}(\cS) \in E] \;\;
    \leq \;\; e^\epsilon \cdot\Pr[\mathcal{M}(\cS') \in E] + \delta.
\end{align*}
Here, probabilities are taken over the random coins of $\mathcal{M}$. 
\end{definition}

The ``neighboring'' relation differs according to the desired
privacy guarantee.
In this paper, we will study \emph{row-level} privacy 
in which neighboring datasets $\cS \sim \cS'$ differ in a single element. 

\subsection{Problem Setting: Leveraging public samples for private transfer learning}

We will study a setting in which the learner first sees
$n_1$ public samples $(x_i, y_i)$,
possibly drawn from multiple different underlying tasks 
(i.e., sample distributions) $P_1, \ldots, P_t$,
and then sees $n_2$ private samples from a new task $P_{t+1}$.
The goal is to learn a predictor $f: \R^d \to \cal{Y}$
that maps inputs $x \in \R^d$ to outputs $y \in \cal{Y}$
with the constraint that $f$ must satisfy $(\eps, \delta)$-differential privacy.
We aim to minimize the population loss on the private task:
\begin{align}
\cal{L}(f) = \bE_{(x, y) \sim P_{t+1}}\left[\ell(f(x), y)\right].
\end{align}
The private learner may or may not use the public samples. 
We assume the samples are drawn i.i.d. conditioned on the task,
but make no other assumptions on the task distribution or the number of samples drawn from each task.
In Section~\ref{sec:single-task},
we develop a theoretical model of the relationship between
the public and private tasks that allows the learner to effectively leverage information from the public tasks to improve private learning.

\section{Public Data Improves Out-of-Distribution Private Transfer}
\label{sec:experiments}
We begin by studying three datasets and show empirically that
public data can provide benefits for private transfer learning
even when the public data alone gives unusable zero-shot results
on the private task.
Each of the tasks we evaluate on has unusably low zero-shot performance on CLIP ~\citep{clip},
indicating that these are highly out-of-distribution relative to the pretraining data.
This directly contrasts with existing work:
the CLIP model that we use (pretrained with LAION-2B) achieves 66.6\% zero-shot performance on ImageNet and 93.5\% accuracy on CIFAR-10.

\subsection{Datasets}

\paragraph{PatchCamelyon.} The PatchCamelyon (PCam) 
medical images dataset is a binary lymph node tumor detection task
highlighted by ~\cite{tramer2022considerations}.
~\cite{tramer2022considerations} point out that 
CLIP~\citep{clip} as well as other similar text-vision models~\citep{pham2023combined} have notably poor zero-shot
performance on PCam: CLIP ViT-B/32 achieves 51.2\%, or close to random, in our evaluation.
The poor zero-shot performance (relative to tasks like ImageNet or CIFAR) indicates that the task 
is truly ``out of distribution''
in comparison to the source (public) data.
Moreover, being medical image data, PCam more faithfully represents
a highly privacy-sensitive dataset.

While the next two datasets are not medical image datasets, 
they are widely studied distribution shift datasets that have poor zero-shot performance on the training data,
making them suitable for understanding transfer learning performance.
In particular, they are remote sensing datasets; Wang et al.~\cite{wang2024skyscript} 
analyze LAION-2B and find that only \emph{0.03\% of samples}
are remote sensing images,
another strong indication that this data is underrepresented in pretraining.

\paragraph{fMoW.} The Functional Map of the World (fMoW) dataset~\citep{fmow, koh2021wilds} is a 62-class satellite image classification task. The pretrained CLIP ViT-B model achieves only 1.64\% zero-shot accuracy, so ``perfect privacy'' with zero-shot classification is not possible.

\paragraph{RESISC45.} The Remote Sensing Image Scene Classification dataset~\citep{cheng2017remote} is a 45-class satellite image classification task. The pretrained CLIP ViT-B model achieves 56.3\% zero-shot accuracy. 

\subsection{Experimental Setup}
\label{sec:experiment-setup}

We train a ViT-B/32 model~\citep{dosovitskiy2020image} on each dataset (which has output dimension 512) with a linear classification head for each task.
For models trained from scratch, we use Xavier initialization on the weights,
while for pretrained features, we use OpenCLIP~\citep{ilharco_gabriel_2021_5143773} models initialized with weights pretrained using LAION-2B (a 2B-sample subset of LAION-5B~\citep{schuhmann2022laion}).
We use the Opacus library~\citep{opacus} to implement private training.
For each training setting we performed a hyperparameter sweep over learning rate ($\{1e-6, \ldots, 1e-2\}$) and number of epochs (1-10 for full training and 1000-2500 for linear probing), and for private learning, clipping norm ($\{0.5, 1.0, 2.5, 5.0\}$).
For both private and nonprivate models, we evaluate training from scratch, full finetuning, and linear probing.
We train private models for $\eps \in \{0.3, 0.4, 0.5, 1.0, 2.0, 5.0\}$ for each training setting.
For PCam and RESISC45, we use SGD with momentum (parameter 0.9),
while for fMoW we found that Adam gave better performance~\citep{wortsman2022robust}.
We use a cosine learning rate schedule for all experiments and
a batch size of 32.
Each finetuning run is performed on an A100 or A6000 GPU.

\subsection{Results}

\begin{table}
\centering
\begin{tabular}{c c c c}
\hline
& PCam & fMoW & RESISC45 \\
\hline
    Zero-shot CLIP & 51.2 & 1.64 & 56.3 \\
        Full training from scratch & 78.2 & 19.7 & 41.9\\
        Full finetuning & 82.5 & 58.2 & 93.6 \\
         Linear probing & 83.5 & 42.1 & 91.7\\
\end{tabular}
\vspace{0.1in}
  \setlength{\belowcaptionskip}{-0.2in}
\caption{Test accuracy of nonprivate training on each dataset that we evaluate.}
\label{tab:nonprivate-training}
\end{table}

\begin{figure}
\begin{subfigure}{0.32\textwidth}
\includegraphics[width=\textwidth]{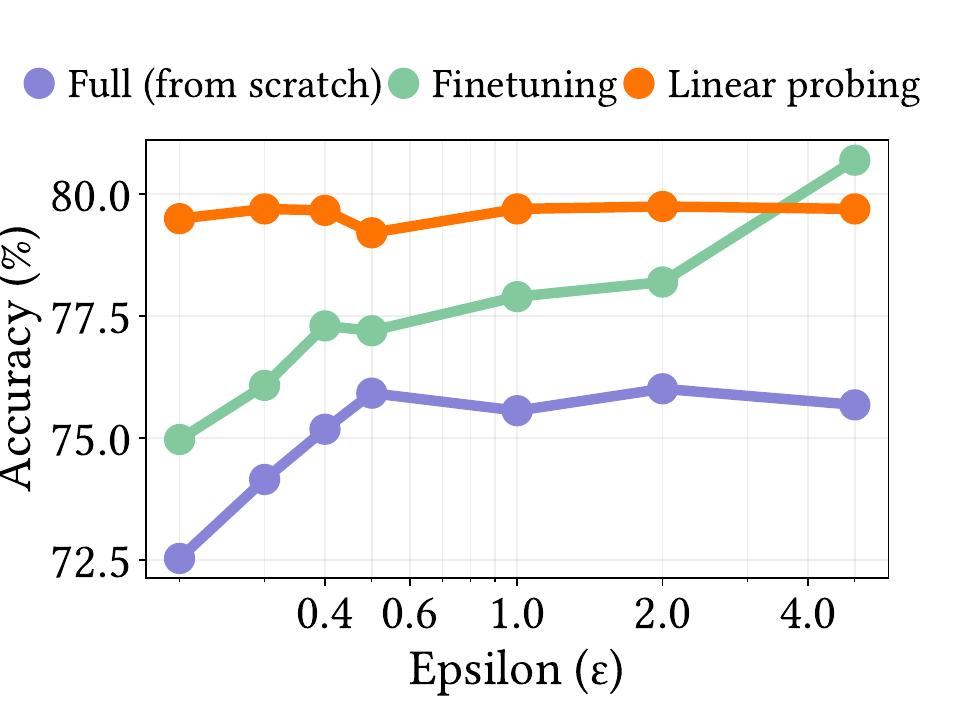}
\caption{PCam.}
\end{subfigure}
\begin{subfigure}{0.32\textwidth}
\includegraphics[width=\textwidth]{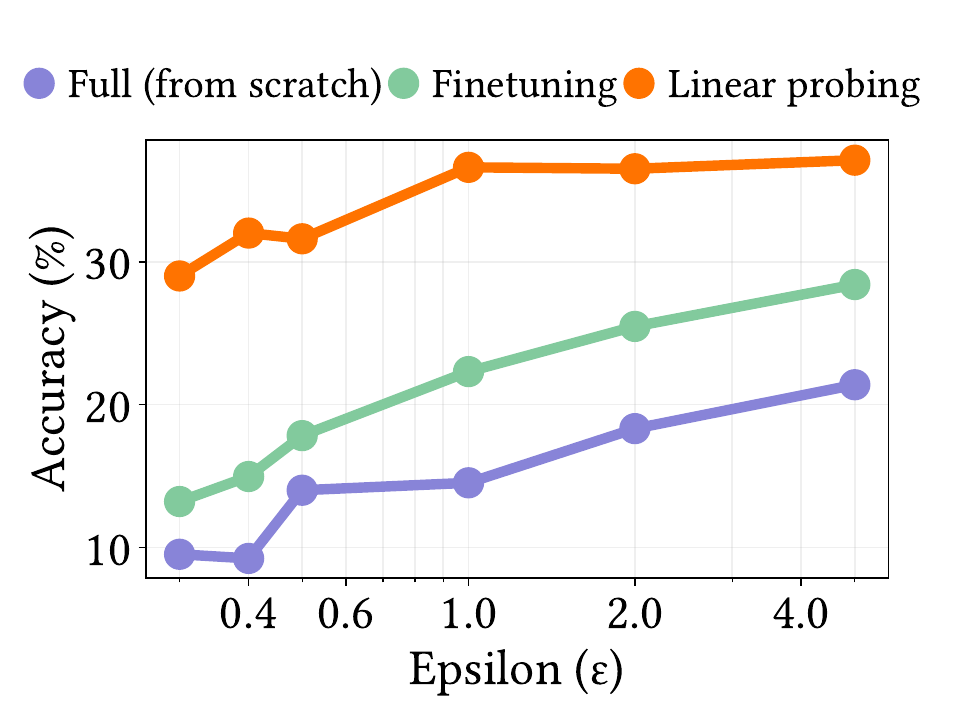}
\caption{fMoW.}
\end{subfigure}
\begin{subfigure}{0.32\textwidth}
\includegraphics[width=\textwidth]{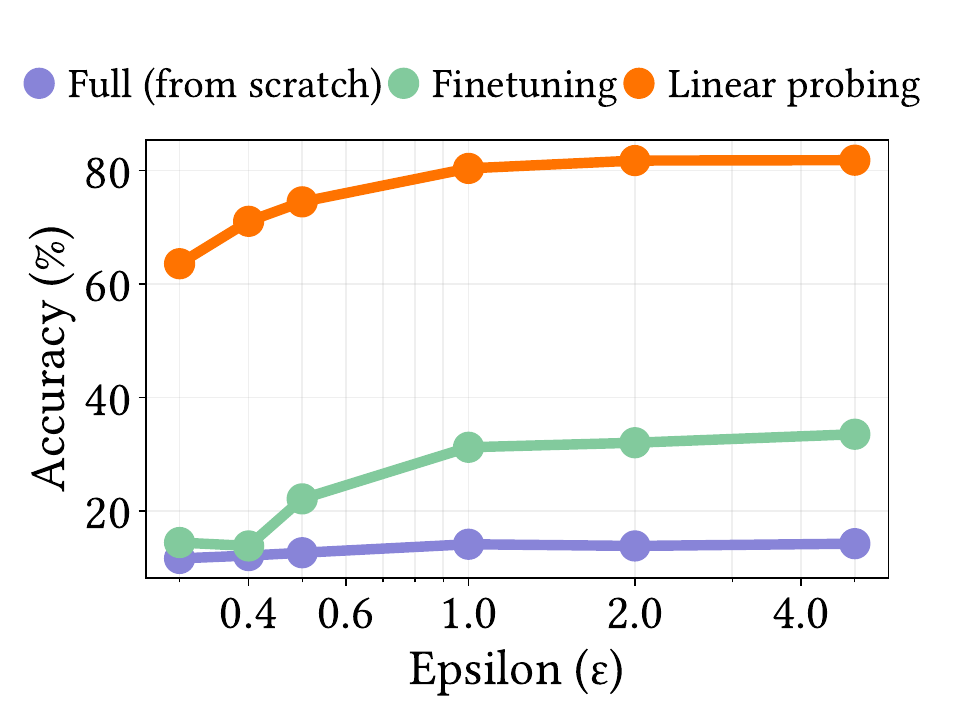}
\caption{RESISC45.}
\end{subfigure}
  \setlength{\belowcaptionskip}{-0.2in}
\caption{Private training on three datasets. (a) PCam is a binary classification task on which private training from scratch achieves relatively high accuracy, but linear probing on the pretrained model still improves accuracy up to 4\%. (b) The fMoW model trained from scratch is unusable at low privacy levels while linear probing achieves close to nonprivate accuracy. (c) On RESISC45, linear probing outperforms full finetuning by over 50\% at all $\eps$ levels.}
\label{fig:private-training}
\end{figure}

\begin{wrapfigure}{r}{0.5\textwidth}%
  \centering
  \vspace{-.2in}
    \includegraphics[width=0.5\textwidth]{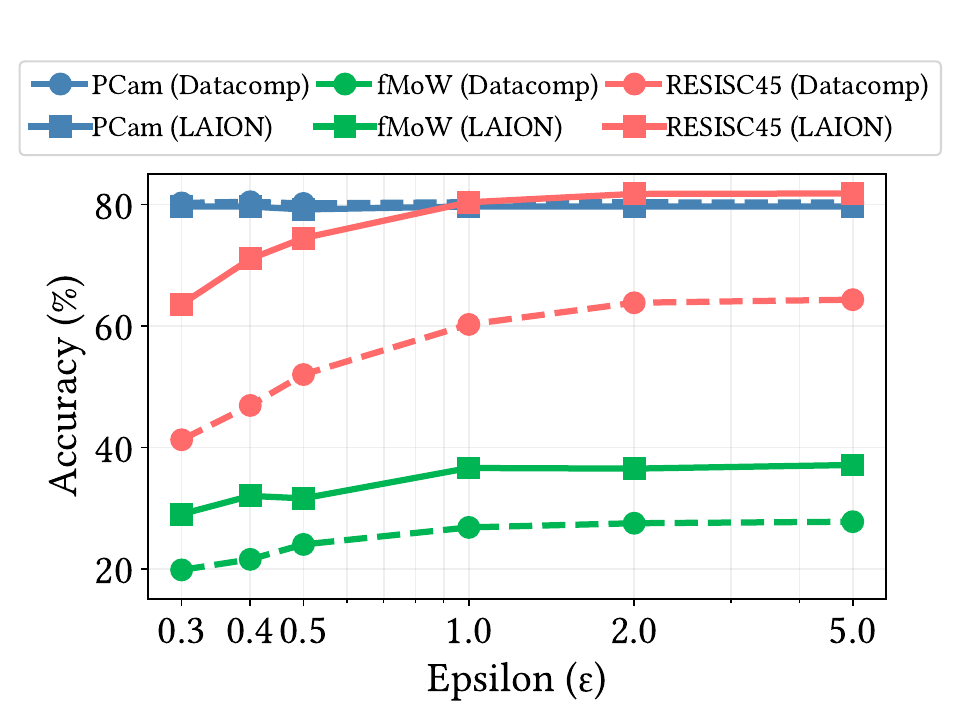}
     \vspace{-.2in}
      \setlength{\belowcaptionskip}{-0.2in}
  \caption{Linear probing results for ViT-B/32 pretrained on a 14M subset of Datacomp-1B and on LAION-2B. (Solid lines are LAION results while dashed lines are Datacomp results.) While the linear probing results in both settings outperform training from scratch, the worse accuracy on the Datacomp pretrained features are reflective of the lower-quality features from the smaller pretraining set.}
  \setlength{\belowcaptionskip}{-1in}
  \label{fig:datacomp}
\end{wrapfigure}

We plot our private training results in Figure~\ref{fig:private-training}, and also provide nonprivate training and zero-shot CLIP numbers for reference in Table~\ref{tab:nonprivate-training}.
Zero-shot CLIP has random accuracy on PCam (binary) and fMoW (62 classes).
On RESISC45, zero-shot CLIP performs better than training from scratch (nonprivately),
but finetuning and linear probing have nearly 40\% higher accuracy.
As pointed out by~\citet{tramer2022considerations},
if the zero-shot numbers (with no knowledge of the transfer task)
matched the best performance of finetuning,
then ``perfect privacy'' with no finetuning would be sufficient.
But in each of these settings, the zero-shot performance is considerably worse than what is achievable with finetuning in both the nonprivate and private settings.

Across all datasets, we find that any type of finetuning significantly
outperforms training privately from scratch.
This indicates that the pretrained features are indeed
contributing to training accuracy.
Further, we find across all datasets that linear probing 
(fixing the pretrained features) outperforms full finetuning,
sometimes by a large margin, as in the case of RESISC45. 
This finding is consistent with theoretical work~\citep{ke2024convergence} that models the benefits of linear probing over finetuning under differential privacy.
This is also consistent with earlier empirical findings on (in-distribution) private finetuning~\citep{kurakin2022toward}.

The key takeaway is positive: that features that work well 
for nonprivate transfer learning also benefit private transfer learning
even when the distribution shift is large.
While the conclusions are similar, these results are especially important in the private setting: training models from scratch with strong privacy 
is simply infeasible for many tasks, 
resulting in only around 10\% test accuracy for fMoW and RESISC at
small values of $\eps$.

To further support our results, we additionally evaluate linear probing for all three datasets with features pretrained on a 14M subset of Datacomp-1B~\citep{gadre2024datacomp} in Figure~\ref{fig:datacomp}.
The trends in this setting are the same and linear probing still
outperforms private training from scratch on all datasets,
but the smaller pretraining dataset leads to lower-quality features that impact the final accuracy of linear probing.

\section{Theoretical Model}
\label{sec:single-task}

Our empirical results show that even when distribution shift is extreme,
public pretraining can indeed improve the accuracy of private training.
In order to explain this observation, we study a simplified linear regression setting in which the goal is to estimate
regression parameters privately for a single, unseen private task.
This setting has been studied extensively in the nonprivate meta-learning literature as a theoretically tractable model to explain results on larger models \citep{tripuraneni2021provable, DuHKLL21, jiang2022subspace, saunshi2021representation, collins2020does, knight2024multi, kumar2022fine},
and we propose a novel extension to the private setting that helps explain our empirical findings.

We show that if the regression parameters for the private task lie in a low-dimensional subspace that is shared with the public tasks,
the learner can use the public data to efficiently estimate the low-dimensional subspace, project the private data into the subspace, 
and thus achieve private estimation error rates that match optimal
private linear regression rates (up to constant factors)
in $k$ dimensions (rather than $d$ dimensions), 
with an additive term that accounts for the error in estimating the subspace publicly.
These results hold even when we make no assumptions on the relationship between the public and private task other than that they share the same low-dimensional subspace.

We additionally provide a novel lower bound that shows that the algorithm we analyze for our upper bound achieves the optimal rate among ``two-stage'' algorithms that estimate the transfer parameters within a fixed low-dimensional subspace.

\paragraph{How realistic is the shared subspace assumption?}

\begin{wrapfigure}{l}{0.5\textwidth}
\centering
    \includegraphics[width=0.5\textwidth]{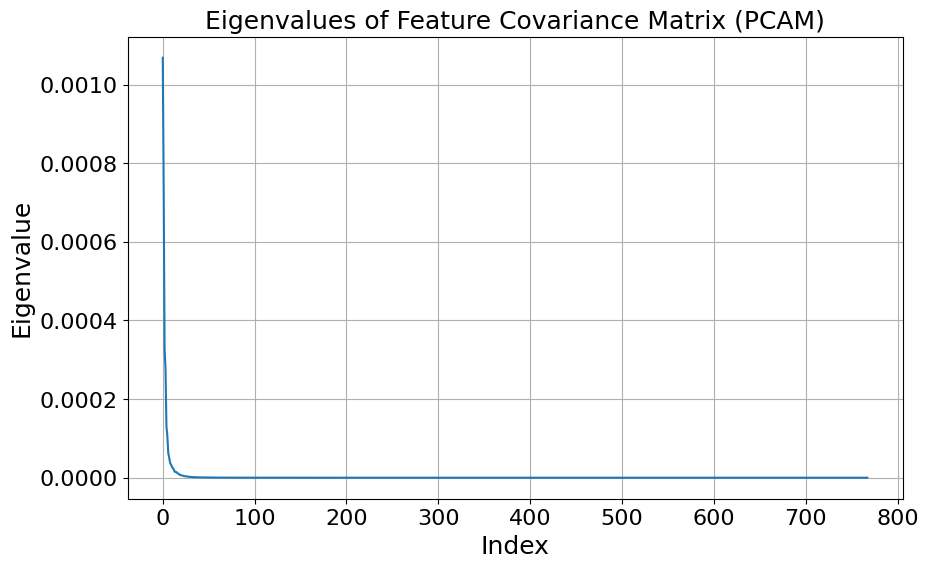}
    \vspace{-.1in}
     \setlength{\belowcaptionskip}{-0.2in}
  \caption{Eigenspectrum of feature covariance matrix for PCam features extracted from pretrained CLIP ViT-B/32 model.}
  \label{fig:eigenvalues_pcam}
\end{wrapfigure}

As mentioned, the theoretical model we analyze has been previously studied to explain meta-learning results in nonprivate settings.
Nevertheless, one might ask how realistic the model is for the particular settings we study,
especially the assumption of a low-rank subspace shared by both the training and transfer tasks.

As a step toward understanding whether this assumption holds in practice,
we plotted the eigenspectrum of the feature covariance matrix computed after extracting features of PCam images from the CLIP ViT-B-32 pretrained model (Figure~\ref{fig:eigenvalues_pcam}). 

From these results, we see that the pretrained features are approximately low-rank for the out-of-distribution task PCam, yet a linear probe over these features achieves good (83.5\%) performance (Table ~\ref{tab:nonprivate-training}). The fact that the representation still gives good performance when only a linear layer is trained on top suggests that the data does fundamentally lie in or near the low-rank space that is identified by the pretrained model. 

In Appendix~\ref{appx:eigenvalues}, we plot and see similar results for the fMoW and RESISC45 datasets, where linear probing is similarly successful (relative to full finetuning).

\subsection{Model and preliminaries}
\label{sec:regression-model}

We first describe our model of the data distribution for the private task, learning objective, any assumptions we make and results from prior works we use.  

\subsubsection{Shared task structure}

We consider linear regression models in which every observation $(x_i, y_i)$ for a given task is generated according to:
\begin{align}
&x_i \; \; \sim \; \; \cal{N}(0, \, I_d), \qquad \eta \;\;  \sim \;\; \mathcal{N}(0, 1) \nonumber \\ 
&\quad \quad\quad y_i \;\; = \;\; x_i^\T B\alpha_{t(i)} + \eta_i.
\label{eq:model-main}
\end{align}
The covariates $x_i$ and noise $\eta$ are sampled i.i.d.
Here, $B \in \R^{d \times k}$
is an unknown, low rank $(k \ll d)$ feature matrix with orthonormal columns. The matrix $B$, and consequently the subspace spanned by its columns, 
is shared across all tasks in our problem setting.
This includes both the public tasks that may be used to derive the initial estimate of $B$,
as well as the private tasks in single-task and multi-task transfer settings.

The task vectors $\alpha_j$ are all assumed to lie in the true shared subspace $B$.
$t(i)$ indexes the task $\alpha_j$ for the covariate $x_i$:
public tasks are in $\alpha_{1 \ldots t}$,
and the transfer task is $\alpha_{t+1}$.
Note that the tasks are not random variables and we do not make
distributional assumptions on the tasks for our results.
In Appendix~\ref{appx:definitions} we provide details on the
requirements for the public tasks $\alpha_{1 \ldots t}$ 
(and also refer the reader to ~\citet{tripuraneni2021provable}),
but for now we simply require that the public tasks are sufficiently ``diverse'' within $B$.

The learner sees $n_1$ samples from the public 
tasks (in total across all tasks)
and $n_2$ samples drawn from the private task.

We are interested in learning $w$ that minimizes the following population risk: 
\begin{align}
\cal{L}(w) = \frac12\bE_{(x, y)}\left[(x^\T w - y)^2\right]
\end{align}
on the private task $B\alpha_{t+1}$.

\subsubsection{Oracle for public subspace estimation}
\label{subsec:pub-sub-est}

In stating our main results, we first assume access to an oracle that can output
an orthonormal matrix $\hat B \in \bR^{d \times k}$ that is ``close to'' $B$. 
We measure the distance between subspaces in terms of the principal angle distance, denoted $\sin \theta(B, \hat{B}) = \sin \theta(\hat{B}, B)$
(see supplement and ~\citet{tripuraneni2021provable} for more discussion).

The following identities on $\sin\;\theta$ will be useful:
\begin{lemma}[subspace estimation errors]
    \label{lem:oracle-gamma-relation} The following inequalities are satisfied for matrices with orthonormal columns $B, \hat B \in \bR^{d \times k}$ (and when $B, \hat B$ are swapped):  $\|(I-\hat{B}\hat{B}^\top)B\|_F \geq \|(I-\hat{B}\hat{B}^\top)B\|_\mathrm{op} = \sin\;\theta (\hat{B}, B) \geq \nicefrac{\|(I-\hat{B}\hat{B}^\top)B\|_F}{\sqrt{k}}$.
\end{lemma}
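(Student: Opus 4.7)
The statement packages three facts about the matrix $M \eqdef (I - \hat{B}\hat{B}^\top) B$, which has $k$ columns in $\R^d$ and hence rank at most $k$. The overall plan is to identify the singular values of $M$ with the sines of the principal angles between the column spans of $B$ and $\hat{B}$, then invoke the standard operator/Frobenius norm comparisons on a rank-$k$ matrix.

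First I would compute $M^\top M$ directly. Since $B$ and $\hat{B}$ have orthonormal columns,
\begin{align*}
M^\top M
  &= B^\top(I - \hat{B}\hat{B}^\top)(I - \hat{B}\hat{B}^\top)B \\
  &= B^\top(I - \hat{B}\hat{B}^\top)B
  = I_k - (\hat{B}^\top B)^\top(\hat{B}^\top B),
\end{align*}
using that $I - \hat{B}\hat{B}^\top$ is an orthogonal projector and hence idempotent. Letting $\hat{B}^\top B = U\Sigma V^\top$ be a thin SVD with $\Sigma = \mathrm{diag}(\sigma_1, \ldots, \sigma_k)$, it follows that $M^\top M = V(I_k - \Sigma^2)V^\top$, so the squared singular values of $M$ are $1 - \sigma_i^2$. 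By the standard definition of principal angles between the $k$-dimensional subspaces $\mathrm{col}(B)$ and $\mathrm{col}(\hat{B})$, the $\sigma_i$ are exactly the cosines $\cos\theta_i$ of those angles, so the singular values of $M$ are the sines $\sin\theta_i$. The largest of these is $\sin\theta(\hat{B},B)$ under the convention that $\sin\theta(\hat{B},B)$ denotes the sine of the largest principal angle, which yields the middle equality $\|M\|_{\mathrm{op}} = \sin\theta(\hat{B}, B)$.

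With that characterization in hand, the two remaining inequalities are immediate consequences of the relation between Frobenius and operator norms applied to a matrix of rank at most $k$. On one hand, $\|M\|_F^2 = \sum_{i=1}^k \sin^2\theta_i \geq \max_i \sin^2\theta_i = \|M\|_{\mathrm{op}}^2$, giving the left inequality. On the other hand, $\|M\|_F^2 = \sum_{i=1}^k \sin^2\theta_i \leq k\,\max_i \sin^2\theta_i = k\,\|M\|_{\mathrm{op}}^2$, which rearranges to $\|M\|_{\mathrm{op}} \geq \|M\|_F / \sqrt{k}$ and gives the right inequality. The symmetric statement (swapping the roles of $B$ and $\hat{B}$) follows by the same argument, since the principal angles are symmetric in the two subspaces and $M^\top M$ and $MM^\top$ share nonzero eigenvalues.

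The only step that is not essentially bookkeeping is justifying that the singular values of $\hat{B}^\top B$ are the cosines of the principal angles, but this is a standard equivalence for the principal-angle definition used throughout the paper (and in \citet{tripuraneni2021provable}), so in the write-up I would simply cite it rather than reprove it. No step looks like a genuine obstacle; the main care needed is being explicit that $(I - \hat{B}\hat{B}^\top)B$ has rank at most $k$ so that the Frobenius-to-operator comparison picks up exactly a $\sqrt{k}$ factor and no worse.
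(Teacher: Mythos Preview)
Your argument is correct. The paper does not actually supply a proof of this lemma; it is stated as a standard identity about principal angles, with the appendix recording the definition $\sin\theta(U,V) = \|U^\top V_\bot\| = \|V^\top U_\bot\|$ and deferring to \citet{stewart1990matrix} and \citet{duchi2022subspace} for background. Your approach---computing $M^\top M = I_k - (\hat B^\top B)^\top(\hat B^\top B)$ to identify the singular values of $(I-\hat B\hat B^\top)B$ with the sines of the principal angles, then using the rank-$k$ Frobenius/operator comparison---is exactly the intended derivation. The middle equality also drops out directly from the paper's definition once you note that $(I-\hat B\hat B^\top)B = \hat B_\bot \hat B_\bot^\top B$ and that left-multiplication by the isometry $\hat B_\bot$ preserves singular values, so $\|(I-\hat B\hat B^\top)B\|_{\mathrm{op}} = \|\hat B_\bot^\top B\|_{\mathrm{op}}$; your SVD route gives the same conclusion with a little more detail.
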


\paragraph{Instantiating the oracle with public data.}
The following corollary characterizes the error incurred from estimating the 
underlying subspace from public data
using the \emph{method-of-moments} estimator from~\citet{tripuraneni2021provable}.
We state this bound for use in subsequent results but
refer the reader to the supplement for the conditions required
on public data in order to achieve this bound.

\begin{restatable}[\cite{tripuraneni2021provable}, Theorem 3, simplified]{theorem}{momub}
\label{thm:subspace-ub}
Let $A = (\alpha_1, \ldots, \alpha_t)^\T$ be the public task matrix,
$\nu = \sigma_k\left(\frac{A^\T A}{t}\right)$,
and $\bar{\kappa} = \frac{\tr(\frac{A^\T A}{t})}{k\nu}$ be the average
condition number.
If an equal number of samples is generated from each task,
and $\bar{\kappa} \leq O(1)$ and $\nu \geq \Omega(\frac1k)$,
then the error of the method-of-moments estimator (~\citep{tripuraneni2021provable}, Algorithm 1) is
\begin{align}
    \sin\theta(\hat{B}, B) \leq \tilde{O}\left(\sqrt{dk^2/n_1}\right).
\end{align}
with probability at least $1 - O(n_1^{-100})$.
\end{restatable}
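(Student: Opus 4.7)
Since this theorem is a simplified restatement of \citet{tripuraneni2021provable}, Theorem 3, my plan is essentially to reproduce the structure of their argument in the simplified form appropriate here, relying on a method-of-moments estimator followed by a Davis--Kahan style subspace perturbation bound. The proof breaks naturally into three steps: (i) identify a population moment whose top-$k$ eigenspace equals the column span of $B$; (ii) show that the empirical version concentrates in operator norm at rate $\tilde O(\sqrt{d/n_1})$; and (iii) convert the operator norm error into a bound on $\sin\theta(\hat B, B)$ using the eigen-gap supplied by the diversity assumption on $A$.

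\textbf{Step 1: The population moment.} For any task with parameter $\theta = B\alpha_j$, the Gaussian model $y = x^\T \theta + \eta$ with $x \sim \cN(0, I_d)$ and $\eta \sim \cN(0,1)$ gives
\begin{align*}
\bE\!\left[ y^2\, x x^\T - y^2\, I_d \right] \;=\; 2\,\theta \theta^\T \;=\; 2\, B \alpha_j \alpha_j^\T B^\T.
\end{align*}
Averaging over the tasks proportionally to the number of samples drawn (which are equal by assumption), the population moment becomes
\begin{align*}
M \;=\; \frac{2}{t}\, B\, A^\T A\, B^\T,
\end{align*}
whose column span is exactly $\mathrm{col}(B)$ and whose $k$-th singular value is $2\nu$ by definition of $\nu = \sigma_k(A^\T A / t)$. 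This is the quantity whose empirical version we diagonalize, and $\hat B$ is defined as its top-$k$ left singular vectors. The diversity assumption $\nu \geq \Omega(1/k)$ is exactly what guarantees a non-trivial eigen-gap so that the top-$k$ subspace is identifiable.

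\textbf{Step 2: Concentration.} Let $\hat M = \frac{1}{n_1}\sum_{i=1}^{n_1} \bigl(y_i^2 x_i x_i^\T - y_i^2 I_d\bigr)$. Each summand is a product of (sub-)Gaussian random variables with bounded moments (using $\|\alpha_j\| \leq O(1)$, which follows from $\bar\kappa \leq O(1)$ together with the bound on $\nu$). A matrix Bernstein / truncated matrix concentration argument, combined with standard tail bounds for sub-exponential quadratic forms, yields
\begin{align*}
\| \hat M - M \|_{\mathrm{op}} \;\leq\; \tilde O\!\left( \sqrt{d / n_1} \right)
\end{align*}
with probability at least $1 - n_1^{-100}$; the polylogarithmic factors absorb tail truncation and union bound costs. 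This is the step where I expect the most bookkeeping, because the summands are heavy-tailed fourth-order polynomials in Gaussians and one must truncate at polylogarithmic thresholds to apply Bernstein cleanly.

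\textbf{Step 3: Perturbation to $\sin\theta$.} Applying the Davis--Kahan $\sin\theta$ theorem to $\hat M$ and $M$ with eigen-gap $\sigma_k(M) - \sigma_{k+1}(M) = 2\nu - 0 = 2\nu$, we obtain
\begin{align*}
\sin\theta(\hat B, B) \;\lesssim\; \frac{\| \hat M - M \|_{\mathrm{op}}}{\nu} \;\leq\; \tilde O\!\left( \frac{1}{\nu}\sqrt{d / n_1} \right) \;\leq\; \tilde O\!\left( k \sqrt{d/n_1} \right),
\end{align*}
using $\nu \geq \Omega(1/k)$. This is slightly weaker than the stated $\tilde O(\sqrt{dk^2/n_1})$ by factors that disappear once we track the conditioning more carefully (the Bernstein bound is in fact $\tilde O(\sqrt{dk/n_1})$ after using $\|\alpha_j\|^2 \leq k\nu\bar\kappa$, and the Davis--Kahan step contributes another factor of $\sqrt{k}$ through $\nu \geq \Omega(1/k)$). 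The main obstacle in a self-contained treatment is therefore the concentration step, since the rest follows from clean linear-algebraic identities; we defer the tedious moment calculations to the original argument of \citet{tripuraneni2021provable}.
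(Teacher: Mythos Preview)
The paper does not prove this theorem at all: it is stated purely as a black-box citation of \citet{tripuraneni2021provable}, Theorem 3, and is invoked only to instantiate the subspace-estimation oracle. So there is nothing in the paper to compare your argument against beyond the statement itself.

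That said, your sketch is a faithful outline of how the cited result is actually proved in \citet{tripuraneni2021provable}: population moment whose top-$k$ eigenspace is $\mathrm{col}(B)$, matrix concentration for the empirical moment, then Davis--Kahan with the eigen-gap controlled by $\nu$. One small wrinkle: in Step~3 you already obtain $\tilde O(k\sqrt{d/n_1}) = \tilde O(\sqrt{dk^2/n_1})$, which \emph{is} the stated bound, so the subsequent sentence about being ``slightly weaker'' and needing to re-track the conditioning is unnecessary and a bit muddled (your alternative accounting there would actually give a worse $\sqrt{dk^3/n_1}$). For the purposes of this paper, simply citing \citet{tripuraneni2021provable} as the authors do is sufficient.
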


We will refer to $\gamma \geq \sin\theta(B, \hat{B})$
as an upper bound on the error of the subspace estimation oracle.
We give upper bounds with respect to $\gamma$ and also instantiate the 
bounds with the upper bound from Theorem~\ref{thm:subspace-ub}.

\subsubsection{Private linear regression in \texorpdfstring{$d$}{d} dimensions}

We use in our analysis a known upper bound for private linear regression in $d$-dimensions. Theorem~\ref{thm:private-lr-informal} states an informal result from ~\cite{pmlr-v178-varshney22a} that upper bounds the excess risk for a variant of DP-SGD~\citep{abadi2016deep} (see Appendix~\ref{appx:definitions} for more details). Furthermore, results from~\cite{cai2021cost} imply that this upper bound is tight.

\renewcommand{\algorithmicrequire}{\textbf{Input:}}
\renewcommand{\algorithmicensure}{\textbf{Output:}}
\begin{algorithm}[t]
    \caption{Two-phase algorithm for public-private linear regression
    using subspace estimation}
    \label{alg:private-transfer}
    \begin{algorithmic}[1] %
    \REQUIRE
    $n_1$ public samples drawn according to
    $(x_i, y_i)$, $x_i \sim \cal{N}(0, I_d)$, $y_i = x_i^\T B\alpha_{t(i)} + \eta$, $\eta_i \sim \cal{N}(0, 1)$
    and $n_2$ private samples where $x_i$, $\eta_i$ have the same distribution, and $y_i = x_i^\T B\alpha_{t+1} + \eta_i$
        \STATE Use method-of-moments estimator (\cite{tripuraneni2021provable},
        Algorithm 1) to estimate $\hat{B}$ using public data
        \STATE Project private data $x_i$ to $k$-dimensional subspace: $x_i' = x_i^\T \hat{B}$
        \STATE Use DP-SGD variant of~\cite{pmlr-v178-varshney22a} on projected private data to estimate $\alpha_{t+1}$
    \ENSURE Parameter estimate $\hat{B}\hat{\alpha}_{t+1}$
    \end{algorithmic}
\end{algorithm}

\begin{theorem}[Corollary 11 from~\cite{pmlr-v178-varshney22a}, simplified]
\label{thm:private-lr-informal}
    Suppose we have $n_2$ i.i.d. datapoints $(x_i, y_i)$, where $x_i \sim \cal{N}(0, \, I_d)$
    and $y_i = x_i^\T w + \epsilon_i$, and $\epsilon_i \sim (0, \, \sigma^2)$.
    Given sufficient private samples $n_2$,
    there exists an $(\varepsilon,\, \delta)$ private estimate $\hat{w}_{\mathrm{priv}}$ such that, with high probability:
        \begin{align}
            \cal{L}(\hat{w}_{\mathrm{priv}}) - \cal{L}(w) \;\; \lesssim \;\;
             \frac{d\sigma^2}{n_2}\left(1 + \tilde{\cO}\left(\frac{d\;\log(1/\delta)}{n_2\eps^2}\right)\right).
        \end{align}
\end{theorem}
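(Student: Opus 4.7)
The plan is to analyze a clipped, noisy DP-SGD variant on the empirical squared loss and recover both components of the target bound: the non-private rate $d\sigma^2/n_2$ and the privacy-inflation factor $d\log(1/\delta)/(n_2\epsilon^2)$. Because $\mathbb{E}[xx^\T] = I_d$, the population risk admits the clean decomposition $\mathcal{L}(w) - \mathcal{L}(w^*) = \tfrac{1}{2}\|w - w^*\|_2^2$, so it suffices to control $\mathbb{E}\|\hat{w}_{\mathrm{priv}} - w^*\|_2^2$. Since the population loss is $1$-strongly convex and smooth in expectation, a noisy projected gradient scheme on clipped gradients converges geometrically up to the noise floor, and converting back to excess risk is immediate.

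First I would establish uniform concentration on the design: standard Gaussian tail bounds give $\|x_i\|_2^2 \lesssim d + \log n_2$ for all $i$ simultaneously, and $|\epsilon_i| \lesssim \sigma\sqrt{\log n_2}$, with high probability. These determine a clipping radius $R$ for gradients $(x_i^\T w - y_i)\,x_i$ that is essentially tight near the true optimum. I would then run DP-SGD with Gaussian noise calibrated by a moments/R\'enyi accountant so that $T$ updates with per-step noise variance $\sigma_{\mathrm{priv}}^2 \asymp R^2 T \log(1/\delta)/(n_2^2\epsilon^2)$ together achieve $(\epsilon,\delta)$-DP. A standard strongly convex noisy-SGD analysis then yields a bound of the form $\tfrac{d\sigma^2}{n_2} + \tfrac{R^2 d \log(1/\delta)}{n_2^2\epsilon^2}$ after balancing $T$ against the noise variance.

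The main obstacle is that a naive clipping threshold inflates $R$: at an arbitrary initialization, the residual $(x_i^\T w - y_i)$ scales with $\|w - w^*\|_2\sqrt{d}$ rather than with $\sigma$, so a one-shot choice of $R$ produces a privacy term of order $d^3/(n_2\epsilon^2)$ instead of the advertised $d^2/(n_2\epsilon^2)$. To close this gap I would adopt the iterative-refinement/warm-start structure of Varshney, Thakurta, and Sivakumar: spend a small fraction of the privacy budget on a coarse private estimate $\hat{w}_0$ of $w^*$, then rerun DP-SGD inside a shrinking ball around $\hat{w}_0$, halving the effective radius per phase. After $O(\log d)$ phases the effective clipping scale is governed only by the intrinsic residual noise $\sigma$ (times $\sqrt{d}$), and combining the per-phase privacy and utility guarantees via advanced composition and a union bound over phases recovers the stated rate. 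The matching lower bound of \citet{cai2021cost} then confirms the bound is tight up to log factors, so no further refinement is possible.
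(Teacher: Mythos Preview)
The paper does not prove this statement at all: it is imported verbatim as a black-box result from \cite{pmlr-v178-varshney22a} (restated more fully as Theorem~\ref{thm:private-lr} in the appendix, again without proof). So there is no ``paper's own proof'' to compare against; the authors simply invoke the cited corollary and move on.

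That said, your sketch is a faithful reconstruction of the argument in the cited reference. The key algorithmic idea in \cite{pmlr-v178-varshney22a} is precisely the adaptive mini-batch/phase structure you describe (their DP-AMBSSGD): a coarse private warm start followed by phases in which the effective clipping radius shrinks geometrically, so that in the final phase the per-sample gradient sensitivity is governed by $\sigma\sqrt{d}$ rather than by the initial distance $\|w_0 - w^*\|_2\sqrt{d}$. Your identification of the $d^3$-versus-$d^2$ gap from na\"ive clipping, and the fix via iterative refinement, is exactly the point of that paper. One minor inaccuracy: the excess risk under identity covariance is $\tfrac{1}{2}\|w-w^*\|_2^2$ only if $\mathcal{L}$ is defined with the $\tfrac{1}{2}$ prefactor (as here), so that part is fine; and the ``sufficient $n_2$'' clause in the statement hides the requirement $n_2 \gtrsim d(1 + \sqrt{\log(1/\delta)}/\eps)$ that you would need to make the phase argument go through.
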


\subsection{Private transfer learning for a single task}

\paragraph{Algorithm.} Our proposed algorithm (Algorithm~\ref{alg:private-transfer}) first projects $x$ into the estimated subspace $\hat{B}_{\mathrm{pub}}$, i.e., $x \mapsto \hat{B}_{\mathrm{pub}}^\top x$, 
and then runs private linear regression in the $k$-dimensional
subspace.
This is analogous to linear probing in our experiments,
which first uses the public encoder to compute a low-dimensional feature representation of the data and then
learns a linear model using the features.
While full finetuning of the model is also a common paradigm in the transfer learning literature,
we point to ~\cite{ke2024convergence} 
which shows that when the feature representation is sufficiently informative, linear probing outperforms finetuning under differential privacy -- a result that supports our empirical findings.

The following theorem states that Algorithm~\ref{alg:private-transfer} achieves a rate that matches optimal rates
for private linear regression in $k$-dimensions, up to the subspace estimation error $\gamma$. 
\begin{theorem}[single-task private transfer upper bound]
\label{thm:public-private-lr}
Assume we have access to a subspace estimation oracle
that solely uses public samples to provide estimate $\hat{B}_{\mathrm{pub}}$ for the unknown subspace $B$ of a private task defined by the pair $(B, \alpha_{t+1})$ in \eqref{eq:model-main}. Further, the estimate satisfies $\sin \; 
 \theta (\hat{B}_{\mathrm{pub}}, B) \leq \gamma$.
Given $n_2$ i.i.d. samples from the distribution of this private task,
Algorithm~\ref{alg:private-transfer}
outputs an estimate $\hat{B}_{\mathrm{pub}}\hat{\alpha}_{\mathrm{t+1}}$
that is $(\eps, \delta)$-differentially private,
and with high probability incurs a risk of:
\begin{align}
       & \cal{L}(\hat{B}_{\mathrm{pub}}\hat{\alpha}_{\mathrm{t+1}}) - \cal{L}(B\alpha_{t+1})  \;\;\; \\&\leq \;\;\; 
      \tilde{O}\paren{\norm{\alpha_{t+1}}{2}^2(\gamma^2 + 1)} 
        \tilde{O}\paren{\frac{1}{n_2^{100}} + \frac{k}{n_2} + \frac{k^2\log(1/\delta)}{n_2^2\eps^2}} + \gamma^2. 
\end{align}
\end{theorem}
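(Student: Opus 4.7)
}
The plan is to reduce the problem to a $k$-dimensional private linear regression on projected covariates, and then charge the bias introduced by projecting onto an imperfect subspace as an additive term. Since the covariates are isotropic Gaussian, the excess risk has the clean form $\cal{L}(w) - \cal{L}(B\alpha_{t+1}) = \tfrac12 \|w - B\alpha_{t+1}\|_2^2$, so it suffices to control the $\ell_2$ distance between $\hat{B}_{\mathrm{pub}}\hat{\alpha}_{t+1}$ and $B\alpha_{t+1}$.

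The first step is to rewrite the private samples in the projected coordinate system. Define $x_i' = \hat{B}_{\mathrm{pub}}^\top x_i \in \R^k$ and $w^\star = \hat{B}_{\mathrm{pub}}^\top B\alpha_{t+1}$. Then
\begin{align*}
y_i \;=\; x_i^\top B\alpha_{t+1} + \eta_i \;=\; (x_i')^\top w^\star \,+\, \underbrace{x_i^\top (I-\hat{B}_{\mathrm{pub}}\hat{B}_{\mathrm{pub}}^\top)B\alpha_{t+1}}_{=:\,\xi_i} \,+\, \eta_i.
\end{align*}
Because $x_i \sim \cN(0,I_d)$ and the projections $\hat{B}_{\mathrm{pub}}\hat{B}_{\mathrm{pub}}^\top$ and $I-\hat{B}_{\mathrm{pub}}\hat{B}_{\mathrm{pub}}^\top$ are orthogonal, $x_i'$ is independent of $(I-\hat{B}_{\mathrm{pub}}\hat{B}_{\mathrm{pub}}^\top)x_i$, and hence of $\xi_i$. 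Thus the effective noise $\tilde\eta_i := \xi_i + \eta_i$ is independent of $x_i'$, Gaussian, and has variance
\begin{align*}
\sigma_{\mathrm{eff}}^2 \;=\; 1 + \|(I-\hat{B}_{\mathrm{pub}}\hat{B}_{\mathrm{pub}}^\top)B\alpha_{t+1}\|_2^2 \;\le\; 1 + \gamma^2 \|\alpha_{t+1}\|_2^2,
\end{align*}
using Lemma~\ref{lem:oracle-gamma-relation} and the assumption $\sin\theta(\hat{B}_{\mathrm{pub}},B)\le\gamma$. Moreover $x_i'\sim\cN(0,I_k)$ since $\hat{B}_{\mathrm{pub}}$ has orthonormal columns, so the projected problem is exactly a $k$-dimensional isotropic Gaussian linear regression with target $w^\star$ and Gaussian noise of variance $\sigma_{\mathrm{eff}}^2$.

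The second step is to apply Theorem~\ref{thm:private-lr-informal} to the projected data set in dimension $k$ with noise variance $\sigma_{\mathrm{eff}}^2$. This yields, with high probability, a private estimate $\hat\alpha_{t+1}$ whose excess risk in the projected problem satisfies
\begin{align*}
\tfrac12\|\hat\alpha_{t+1} - w^\star\|_2^2 \;\lesssim\; \frac{k\,\sigma_{\mathrm{eff}}^2}{n_2}\!\left(1 + \tilde{\cO}\!\left(\frac{k\log(1/\delta)}{n_2\eps^2}\right)\right).
\end{align*}
Privacy of the overall algorithm is immediate: $\hat{B}_{\mathrm{pub}}$ depends only on public data, the projection step is post-processing of the private sample, and the DP-SGD variant from~\cite{pmlr-v178-varshney22a} is $(\eps,\delta)$-DP in the row-level sense.

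Finally, the third step is to lift the bound back to the ambient space. Using the decomposition
\begin{align*}
\hat{B}_{\mathrm{pub}}\hat\alpha_{t+1} - B\alpha_{t+1} \;=\; \hat{B}_{\mathrm{pub}}(\hat\alpha_{t+1}-w^\star) \,+\, (\hat{B}_{\mathrm{pub}}\hat{B}_{\mathrm{pub}}^\top - I)B\alpha_{t+1},
\end{align*}
together with orthonormality of $\hat{B}_{\mathrm{pub}}$, the inequality $(a+b)^2 \le 2a^2+2b^2$, and Lemma~\ref{lem:oracle-gamma-relation} again, one obtains
\begin{align*}
\cal{L}(\hat{B}_{\mathrm{pub}}\hat\alpha_{t+1}) - \cal{L}(B\alpha_{t+1}) \;\lesssim\; \|\hat\alpha_{t+1}-w^\star\|_2^2 \,+\, \gamma^2\|\alpha_{t+1}\|_2^2,
\end{align*}
which combined with the previous display yields exactly the stated bound (the $1/n_2^{100}$ term absorbs the failure probabilities of the high-probability events used in Theorem~\ref{thm:private-lr-informal}).

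The main obstacle is the independence argument and the correct accounting of the effective noise variance: the off-subspace component of $x_i$ contributes both to the ``noise'' seen by the $k$-dimensional regression and to an irreducible bias that survives after the regression converges. Getting the two contributions separated cleanly --- so that the inflated $\sigma_{\mathrm{eff}}^2$ enters multiplicatively through Theorem~\ref{thm:private-lr-informal} while the residual bias $\gamma^2\|\alpha_{t+1}\|_2^2$ enters additively --- is where Gaussian isotropy and the orthogonality $\hat{B}_{\mathrm{pub}}\perp (I-\hat{B}_{\mathrm{pub}}\hat{B}_{\mathrm{pub}}^\top)$ are crucial; without isotropy, $x_i'$ and $\xi_i$ would be correlated and the reduction to a standard linear regression oracle would fail.
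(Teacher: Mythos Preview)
Your proposal is correct and follows essentially the same approach as the paper: reduce to a $k$-dimensional private regression by projecting onto $\hat{B}_{\mathrm{pub}}$, use Gaussian orthogonality (the paper's Lemma~\ref{lemma:residual-indep}) to treat the off-subspace component as independent noise with variance $1+\gamma^2\|\alpha_{t+1}\|_2^2$, invoke Theorem~\ref{thm:private-lr}, and then add the irreducible bias $\|(I-\hat{B}_{\mathrm{pub}}\hat{B}_{\mathrm{pub}}^\top)B\alpha_{t+1}\|_2^2\le\gamma^2\|\alpha_{t+1}\|_2^2$. Two small remarks: in your final decomposition the two summands $\hat{B}_{\mathrm{pub}}(\hat\alpha_{t+1}-w^\star)$ and $(\hat{B}_{\mathrm{pub}}\hat{B}_{\mathrm{pub}}^\top-I)B\alpha_{t+1}$ are actually orthogonal, so $(a+b)^2\le 2a^2+2b^2$ is not needed; and the $1/n_2^{100}$ term in Theorem~\ref{thm:private-lr} is the $\|w^\star\|_2^2/n_2^{100}$ geometric-decay term rather than a failure-probability absorption, though this does not affect the final bound.
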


\emph{Proof sketch.} The proof nearly follows from existing bounds
on subspace estimation and private linear regression.
The key difficulty is that regression on the input $x \sim \cal{N}(0, I_d)$
projected into the estimated subspace $\hat{B}_{\mathrm{pub}}$
still leaves the residual that does not lie in $\hat{B}_{\mathrm{pub}}$,
which can be treated as a noise term if we can show that
the residual is independent of the projected $x$.
We can show this because $\hat{B}_{\mathrm{pub}}$ is orthogonal to ${{\hat B}_{{\mathrm{pub}}}}^{\bot}$ (spans null space of $\hat{B}_{\mathrm{pub}}$),
so under the i.i.d. Gaussian assumption on $x$, the residual is independent of the projected $x$.
As a result, we obtain the private linear regression rate in $k$
dimensions with a variance of $1 + \gamma^2$ rather than 1
and an additive $\gamma^2$ bias.

\paragraph{Discussion.} From Theorem~\ref{thm:public-private-lr}, we can break down the errors into an unavoidable bias due 
to the subspace estimation error (dependent only on the number of public samples)
and the subsequent linear regression error due to privacy.
For a subspace estimation error $\gamma$ we require $n_1 \geq \frac{dk^2}{\gamma^2}$.
Given this inevitable error we can hope to achieve an accuracy of $\mathrm{err} + \gamma^2$
where $\mathrm{err}$ is the additional linear regression error and $\sin\; \theta \;(B, \hat{B}_{\mathrm{pub}}) \leq \gamma$.
This requires approximately:
\begin{equation}
n_2 \;\; \geq \;\; \frac{k}{\mathrm{err}} + \frac{k}{\eps \sqrt{\mathrm{err}}}
\end{equation}
samples. That is, if the subspace estimation error is zero then 
we achieve the rate of private linear regression in $k$ dimensions, and consequently optimal non-private rates when $\epsilon \rightarrow \infty.$ 

\subsection{Lower bound for two-phase estimator}

In the previous subsection, we proved an upper bound on the single-task transfer for row-level $(\varepsilon, \delta)$-DP private algorithm, when the publicly estimated subspace $\hat{B}_{\mathrm{pub}}$ is $\gamma$ accurate. 
In this section, we show that our upper bound is tight among algorithms
for our problem that search for solutions within a fixed subspace.

In particular, we analyze the lowest possible transfer error achieved by any $(\varepsilon, \delta)$-DP algorithm that: (i) takes as input private dataset $\cS$ of $n_2$ i.i.d. samples from task $\alpha_{t+1}$, $\gamma$-accurate public estimate $\hat{B}_{\mathrm{pub}}$, and (ii) outputs an estimate in the column space of $\hat{B}_{\mathrm{pub}}$.
In Theorem~\ref{thm:single-task-lb}, we present a lower bound on the risk suffered by any algorithm in such a class.

\begin{restatable}[Two-stage single-task private transfer lower bound]{theorem}{stlowerbd} \label{thm:single-task-lb} Let $M$ be an  $(\varepsilon, \delta)$-DP private algorithm where $\varepsilon \in (0, 1)$, $\delta < \nicefrac{1}{n^{1+\omega}}$, $\omega > 0$, that takes as input: (i) publicly estimated subspace $\hat{B}_{\mathrm{pub}}$ from an oracle that only uses public samples; and (ii) a dataset $\cS$ of $n_2$ private samples.
For any such $M$, 
there exists a private problem instance given by the pair $(B, \alpha_{t+1})$ where $B \in \mathrm{Gr}_{k,d}(\bR),\; \alpha_{t+1} \in \bR^k$, $ \sin \;\theta(B, \hat{B}_{\mathrm{pub}}) \leq \gamma$,  and  $\|B\alpha_{t+1}\|_2 \leq 1$, such that for $S$ sampled i.i.d. from this instance using the model in \eqref{eq:model-main}, we have: 
    \begin{align}
        \label{eq:transfer-lb}
        & \bE_M \bE_{\cS \mid B, \alpha_{t+1}} \bE_{(x,y) \mid B, \alpha_{t+1}} (y-M(\cS, \hat{B}_{\mathrm{pub}})^\top  x)^2 \;\;\;  \\&= \;\;\; \Omega \paren{
        \paren{\frac{k^2}{n_2^2 \varepsilon^2} + \frac{k}{n_2}}  (\sigma^2 +\gamma^2) + \gamma^2}. 
    \end{align}
\end{restatable}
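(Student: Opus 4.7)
The plan is to decompose the expected squared loss into an in-subspace estimation error plus an irreducible out-of-subspace bias, then lower bound each on a single adversarial instance. Since $M(\cS, \hat{B}_{\mathrm{pub}})$ lies in the column span of $\hat{B}_{\mathrm{pub}}$ by assumption, I can write it as $\hat{B}_{\mathrm{pub}}\hat\alpha$ for a random $\hat\alpha \in \bR^k$; using $x \sim \cN(0, I_d)$ and a Pythagorean identity,
\begin{equation*}
\bE_{x,\eta}\bigl[(y - x^\top \hat{B}_{\mathrm{pub}}\hat\alpha)^2\bigr] - \sigma^2 \;=\; \|\hat{B}_{\mathrm{pub}}^\top B\alpha_{t+1} - \hat\alpha\|_2^2 \;+\; \|(I - \hat{B}_{\mathrm{pub}}\hat{B}_{\mathrm{pub}}^\top)B\alpha_{t+1}\|_2^2.
\end{equation*}

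For the bias term I fix the geometry: take $\hat{B}_{\mathrm{pub}} = [e_1 \,|\, \cdots \,|\, e_k]$ and $B = [e_1, \ldots, e_{k-1}, \sqrt{1-\gamma^2}\,e_k + \gamma\,e_{k+1}]$, so that $\sin\theta(B, \hat{B}_{\mathrm{pub}}) = \gamma$. Parameterizing the task by $\alpha_{t+1} = (\tilde\alpha, c)^\top \in \bR^{k-1} \times \bR$ for a fixed $c = \Theta(1)$ and $\tilde\alpha$ ranging over an $\ell_2$ ball of radius $\Theta(1)$ chosen so that $\|B\alpha_{t+1}\|_2 \leq 1$, the orthogonal component of $B\alpha_{t+1}$ has norm exactly $c\gamma$, contributing a uniform bias of $\Theta(\gamma^2)$ across the entire family.

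For the in-subspace term I reduce from $k$-dimensional $(\varepsilon,\delta)$-DP linear regression. Given any $k$-dim LR instance $(z_i, y_i)$ with $z_i \sim \cN(0, I_k)$ and $y_i = z_i^\top \tilde\beta + \eta_i$, I construct a transfer instance by sampling $w_i \sim \cN(0, I_{d-k})$, setting $x_i = (z_i, w_i)$, and invoking $M$ with $\hat{B}_{\mathrm{pub}} = [e_1,\ldots,e_k]$; post-processing the output as $\hat\alpha = \hat{B}_{\mathrm{pub}}^\top M(\cdot)$ yields an $(\varepsilon,\delta)$-DP estimate of $\tilde\beta$ from the LR dataset. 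Invoking the private minimax lower bound of \citet{cai2021cost} in $k$ dimensions then gives $\bE\|\hat\alpha - \tilde\beta\|_2^2 = \Omega\bigl(\sigma^2(k/n_2 + k^2/(n_2^2\varepsilon^2))\bigr)$, which transports back onto the worst-case instance from the previous paragraph.

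Summing the $\Omega(\gamma^2)$ bias with the $\Omega(\sigma^2(k/n_2 + k^2/(n_2^2\varepsilon^2)))$ in-subspace bound recovers the claimed lower bound up to universal constants, since $\gamma^2(k/n_2 + k^2/(n_2^2\varepsilon^2)) \leq \gamma^2$ in the relevant regime where the ratio is at most one. The main obstacle is controlling the algorithm's potential to exploit the residual coordinates $(I - \hat{B}_{\mathrm{pub}}\hat{B}_{\mathrm{pub}}^\top)x_i$: when $\gamma > 0$, the algorithm could in principle use them to identify and subtract the orthogonal component of $B\alpha_{t+1}$, reducing its effective noise below $\sigma^2 + \gamma^2$. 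To cleanly obtain the tighter $(\sigma^2+\gamma^2)$ factor in the in-subspace term explicitly, I would additionally randomize the orthogonal direction of $B$ over a packing on the Grassmannian within angle $\gamma$ of $\hat{B}_{\mathrm{pub}}$, so that no $(\varepsilon,\delta)$-DP algorithm can identify the direction to subtract off; the resulting private Fano argument on this joint packing is where the bulk of the technical work lies.
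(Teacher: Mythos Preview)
Your overall strategy---a Pythagorean split of the excess risk into an in-subspace estimation error and an irreducible orthogonal bias, then lower-bounding each piece---is sound and is exactly the decomposition the paper uses (their equation~\eqref{eq:lb-reduction-2}). Where you diverge is in how you handle the in-subspace term: you propose a black-box reduction to the private linear-regression lower bound of \citet{cai2021cost}, whereas the paper rebuilds the tracing-attack argument from scratch (Lemma~\ref{lem:attack-success}), constructing an explicit truncated-Gaussian prior over the projected parameter and invoking Stein's lemma. Your reduction is more economical if one is willing to cite \citet{cai2021cost} as a black box; the paper's direct argument is more self-contained and also yields the $(\sigma^2+\gamma^2)$ scaling in the in-subspace term without any extra work---though, as you correctly observe, that extra factor is absorbed by the additive $\gamma^2$ in the regime $k/n_2 + k^2/(n_2^2\varepsilon^2)\le 1$.

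One gap to flag: the reduction you actually write embeds a $k$-dimensional instance with $x_i=(z_i,w_i)$ and $y_i=z_i^\top\tilde\beta+\eta_i$, whose true parameter $(\tilde\beta,0,\dots,0)$ lies \emph{entirely} in the span of $\hat B_{\mathrm{pub}}$; you have implicitly taken $B=\hat B_{\mathrm{pub}}$, so the orthogonal bias on \emph{that} instance is zero. This is not the same family as your second paragraph (where $B$ sits at angle $\gamma$ with a fixed last coordinate $c$), so you cannot literally ``sum'' the two bounds on one instance as claimed. The fix is easy: either argue via $\max(a,b)\ge(a+b)/2$ over the two families, or---closer to the paper---run the reduction \emph{inside} your angle-$\gamma$ family from a $(k{-}1)$-dimensional LR instance on the first $k-1$ coordinates, adding $c\sqrt{1-\gamma^2}\,x_{i,k}+c\gamma\,x_{i,k+1}$ when constructing $y_i'$, so that the $c^2\gamma^2$ bias and the in-subspace hardness live on the same instance.

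Finally, your ``main obstacle'' about the algorithm exploiting the residual coordinates $(I-\hat B_{\mathrm{pub}}\hat B_{\mathrm{pub}}^\top)x_i$ is a real concern for an algorithm that sees the full $x_i$, but the paper's lower bound is stated for the two-stage class $\twostgalg$ which, by the appendix definition, \emph{first projects} $x_i\mapsto\hat B_{\mathrm{pub}}^\top x_i$. Under that restriction the residual is simply unavailable to $M$, the effective noise is exactly $\sigma^2+\|(I-\hat B_{\mathrm{pub}}\hat B_{\mathrm{pub}}^\top)B\alpha_{t+1}\|_2^2$, and no Grassmannian packing or private Fano step is needed. Your proposed packing argument would only be required to extend the bound to the broader class, which the paper does not attempt.
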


\emph{Proof Sketch.} 
Our proof relies mainly on tracing attacks in \cite{bun2014fingerprinting, cai2021cost}, but our analysis additionally needs to handle the misspecification of the subspace $B$ which influences the construction of the worst case problem instance. 
When we project inputs $x \mapsto \hat{B}_{\mathrm{pub}}^\top x$, we can show that the projected samples can now be treated as i.i.d. samples from a $k$-dimensional linear regression model with independent noise. 
For a fixed $\hat{B}_\mathrm{pub}$, any choice of $B, \alpha_{t+1}$ affects both the scaling of the noise ($\propto \|(I- \hat{B}_{\mathrm{pub}} \hat{B}_{\mathrm{pub}}^\top)B\alpha_{t+1}\|_2^2$), and the direction of the regression vector, based on how much of the true parameter $B\alpha_{t+1}$ is captured in given subspace $\hat{B}_{\mathrm{pub}}$. 
To handle this, we first construct subclasses of the adversary, where each subclass fixes the norm of $\|\hat{B}_{\mathrm{pub}}^\top B\alpha_{t+1}\|_2$. 
Then, we lower bound the minimax risk over this subclass by via a Bayes risk which we further lower bound by constructing a \emph{tracing adversary}.

We show that there exists a prior $\pi$ over $B\alpha_{t+1}$ where the probability of the intersection of the following two events is very low: (i) small estimation error $\bE_\pi \mathcal{L}(M(\cS, \hat{B}_{\mathrm{pub}}))$, and (ii) small success rate for the tracing adversary to infer the membership of some element in $\cS$. Since, $M$ has to be $(\epsilon, \delta)$ private, this reults in a Bayes risk lower bound. 

\paragraph{Discussion.} Our lower bound for the class of two-stage algorithms matches our upper bound in Theorem~\ref{thm:public-private-lr}. This implies that our Algorithm~\ref{alg:private-transfer} is optimal when $\hat B_\mathrm{pub}$ is the estimate given by the optimal subspace estimation oracle over public samples. When we use Algorithm 1 from \cite{tripuraneni2021provable}, the estimation error matches lower bounds (Theorem 5 in \cite{tripuraneni2021provable}) upto a factor of $\sqrt{k}$.

\subsection{Simulated results} 
Finally, we complement the results in this section through a simulated empirical study matching the setup described in Section~\ref{sec:regression-model}.

\textbf{Setup.} We simulate $n_1$ samples $(x_i, y_i)$ from $t=100$ public tasks
where the true dimension $d = 25$ but the underlying subspace $B$ has rank 5.
As baselines, we compare against nonprivate linear regression,
DP-SGD without a subspace estimate, and DP-SGD initialized with the true 
subspace $B$, and compare against DP-SGD initialized with the subspace estimated
using the method-of-moments estimator~\cite{tripuraneni2021provable}.
We use the Google Tensorflow implementation of DP-SGD for private learning~\cite{tensorflow2015-whitepaper}.
\begin{wrapfigure}{r}{0.5\textwidth}
\centering
    \includegraphics[width=0.5\textwidth]{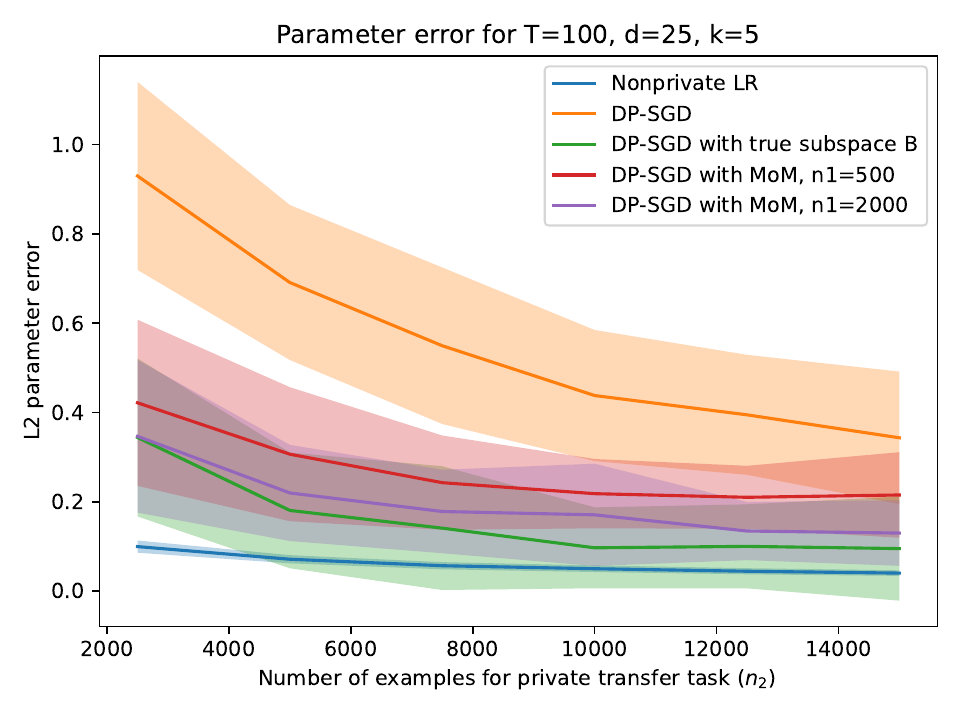}
    \vspace{-.3in}
     \setlength{\belowcaptionskip}{-0.2in}
  \caption{Empirical verification of setup described in Section~\ref{sec:regression-model}.}
  \label{fig:l2error}
\end{wrapfigure}
We used a grid search of hyperparameters to set the clipping norm to $0.5$,
learning rate to $0.1$, and used $50$ epochs of training for DP-SGD.
We use the RDP accountant to set $\eps = 1.1$ and $\delta = 1\mathrm{e}-5$.

Our results are shown in Figure~\ref{fig:l2error}. We observe that, as expected, private training from scratch has high error, 
and additional public data ($n_1=500$ vs $n_1=2000$) improves performance, reducing the $\ell_2$ parameter error close to that of using DP-SGD with the true underlying subspace B 
(matching our intuition, for example, from Figure~\ref{fig:datacomp}). 
However, we also see that when performing private transfer there are diminishing returns for this more precise subspace estimation, as the noise introduced via private learning becomes a dominating factor.

\section{Discussion and Limitations}
\label{sec:conclusion}

Our results answer questions posed by ~\cite{tramer2022considerations} positively.
Empirically, we show that across three datasets with significant shift between the public and private tasks, 
publicly pretrained features \emph{do} make private learning far more effective,
taking models from unusable when trained from scratch to close-to-nonprivate performance when trained privately with linear probing.
In addition, we provide a theoretical model to explain our findings, 
based on models of nonprivate transfer learning.
Our model supports our empirical findings, suggesting that public features should indeed reduce private sample complexity under even extreme distribution shift
when the public and private tasks share a low-dimensional representation.
Altogether, our conclusions are optimistic and provide confidence that public data can indeed support private training even for highly sensitive tasks that cannot and should not be used in public training.
However, our linear subspace model has the clear limitation of being a simplified model for the neural network representations used in practice.
As this is a limitation shared by literature on nonprivate transfer learning~\citep{tripuraneni2021provable, DuHKLL21, jiang2022subspace, saunshi2021representation, collins2020does, knight2024multi, kumar2022fine},
improvements in this area would contribute to both the private and nonprivate transfer learning literature.

\paragraph{Acknowledgements.} Thanks to Shengyuan Hu, Tian Li, Sivaraman Balakrishnan, Qi Pang, and Anirudh Sivaraman for helpful discussions and feedback that improved the writing. 

This work was supported in part by the National Science Foundation grants IIS2145670 and CCF2107024, and funding from Amazon, Apple, Google, Intel, Meta, and the CyLab Security and Privacy Institute. Any opinions, findings and conclusions or recommendations expressed in this material are those of the author(s) and do not necessarily reflect the views of any of these funding agencies. Z.S.W. was in part supported by NSF Awards \#1763786 and \#2339775. A.S. would also like to thank JP Morgan AI PhD fellowship for their generous support. P.T. was supported by a Carnegie Bosch Institute fellowship.

\bibliographystyle{unsrtnat}
\bibliography{main}  

\newpage

\appendix
\newpage
\onecolumn
\section{Empirical evidence for shared subspace assumption}
\label{appx:eigenvalues}
\begin{figure}[h]
\centering
\begin{subfigure}{0.45\textwidth}
\includegraphics[width=\textwidth]{figures/eigspec_pcam}
\caption{PCam.}
\end{subfigure}
\begin{subfigure}{0.45\textwidth}
\includegraphics[width=\textwidth]{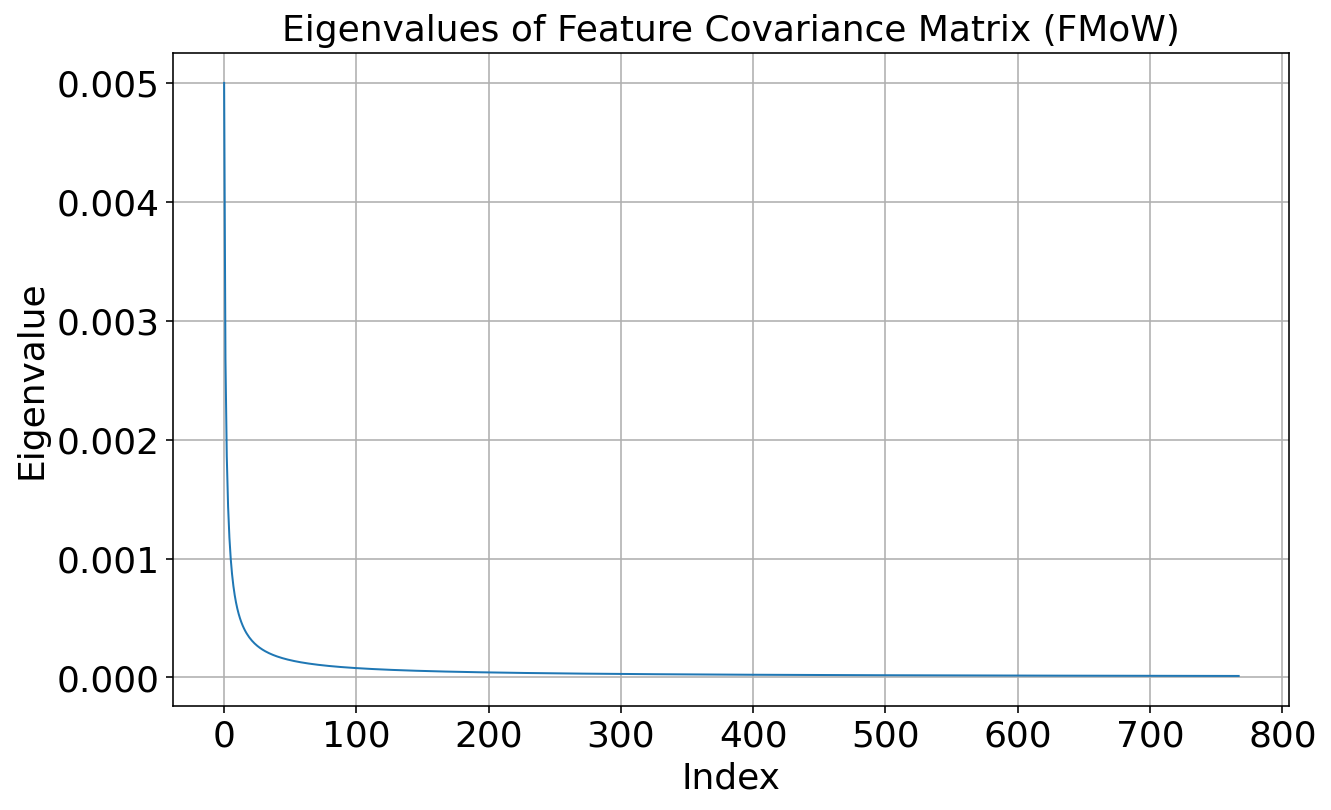}
\caption{fMoW.}
\end{subfigure}\\
\begin{subfigure}{0.45\textwidth}
\includegraphics[width=\textwidth]{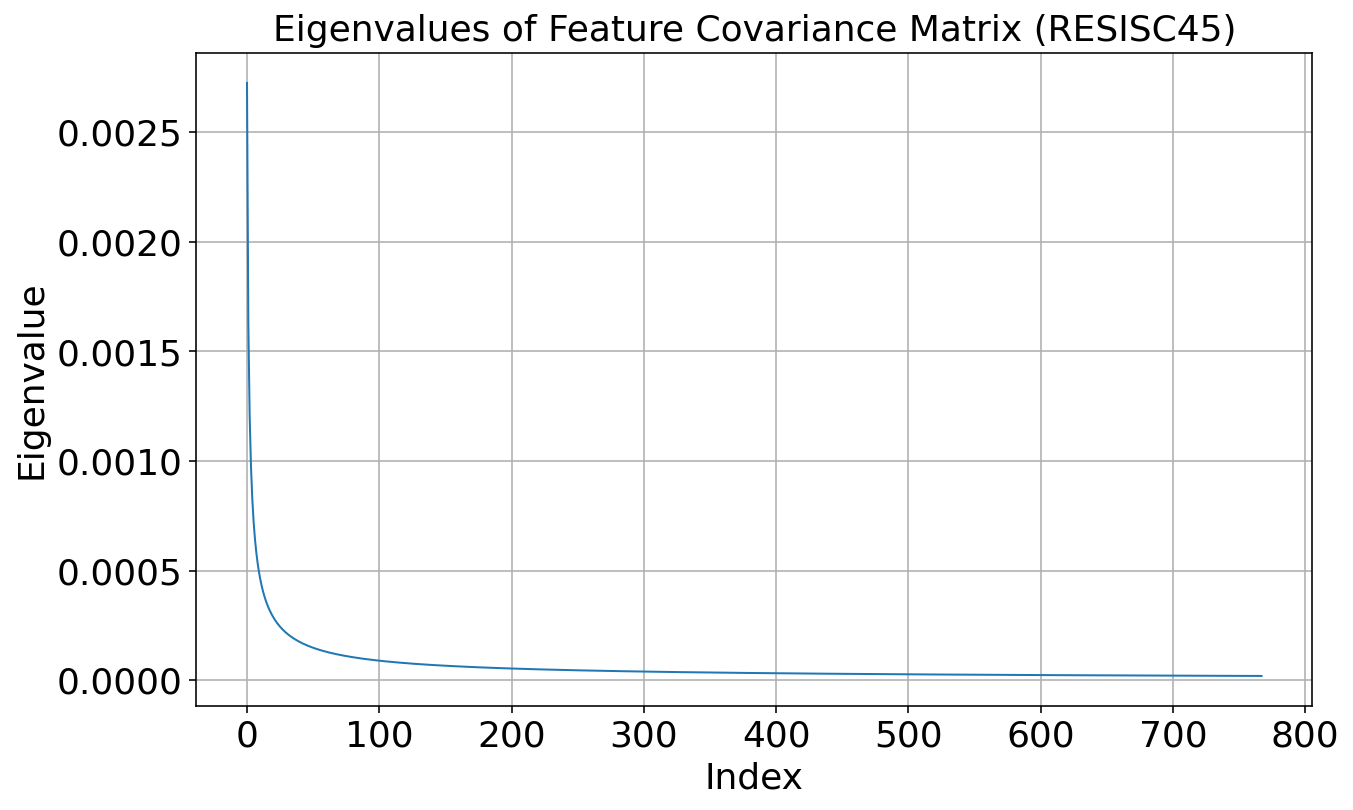}
\caption{RESISC45.}
\end{subfigure}
\caption{Eigenspectra of feature covariance matrices for features extracted from pretrained CLIP ViT-B/32 model.}
\label{fig:eigenvalues}
\end{figure}

It is natural to ask whether the assumption that public (pretraining)
tasks and private tasks truly share a low-dimensional subspace 
as we model in our theoretical analysis.
In order to validate this, in figure~\ref{fig:eigenvalues}, we plot the eigenspectra of the 
feature covariance matrices for each of the datasets we evaluate
in Section~\ref{sec:experiments}.
The key takeaway is that even though the three datasets are out of distribution for the pretraining data,
the extracted features are low rank.
In addition, these features are effective when used to train a linear probe
(in contrast, if the subspace were misspecified, 
the features may be low-rank but lead to poor results with linear probing).

\section{Additional definitions and assumptions}
\label{appx:definitions}

\subsection{Preliminaries for \texorpdfstring{~\citet{tripuraneni2021provable}}{}}

In this section, we elaborate on the assumptions required to use
algorithms from~\citet{tripuraneni2021provable} for subspace estimation using \emph{public} data
(i.e., instantiating the subspace oracle).

\subsubsection{Principal angles}

Our analysis requires a notion of distance between subspaces, 
for which we use the maximum principal angle~\citep{jordan1875essai}.
We give a definition here and refer the reader to
~\cite{stewart1990matrix} or ~\cite{duchi2022subspace}, Appendix A, for more details.

\begin{definition}[Maximum principal angle]
    \label{def:principal-angle}
    Let $U, V \in \R^{d \times k}$ be orthogonal matrices.
    Let $\cal{U}$ and $\cal{V}$ be the subspaces spanned by the
    columns of $U$ and $V$ respectively.
    The \emph{maximum principal angle} $\theta \in [0, \pi/2]$ 
    between $\cal{U}$ and $\cal{V}$ is defined by
    $\sin \theta(U, V) = \norm{UU^\T - VV^\T}{} = \norm{U^\T V_\bot}{} = \norm{V^\T U_\bot}{}$.
\end{definition}

\subsubsection{Task diversity assumptions}

In our model each data point $(x_i, y_i)$ is associated with a task $\alpha_{t(i)} \in \R^k$.
We do not make distributional assumptions on these tasks,
but estimating the subspace accurately requires certain diversity assumptions on the tasks.
We inherit the following assumption from~\cite{tripuraneni2021provable}:
\begin{assumption}[Task diversity and normalization]
\label{assumption:diversity}
Define $A = (\alpha_1, \ldots, \alpha_t)^\T$ and 
$\nu = \sigma_r\left(\frac{A^\T A}{t}\right)$.
The $t$ underlying task parameters $\alpha_j$ satisfy $\norm{\alpha_j}{} = \Theta(1)$ for all $j \in [t]$. Moreover, we assume $\nu > 0$.
\end{assumption}

In the following, we will also use the average condition number
$\bar{\kappa} = \frac{\tr\left(\frac{A\T A}{t}\right)}{r\nu}$, 
and the worst-case condition number $\kappa = \sigma_1\left(\frac{A^\T A}{t}\right)/\nu$,
to further characterize the task diversity.

Then we have:

\momub*

\subsection{Estimation error bounds}

We restate the full bound given by~\cite{pmlr-v178-varshney22a} for 
private SGD.

\begin{theorem}[\cite{pmlr-v178-varshney22a}, Corollary 11, simplified]
\label{thm:private-lr}
    Suppose we have data $(x_i, y_i)$ such that $x_i \sim \cal{N}(0, I_k)$
    and $y_i = x_i^\T w^* + \epsilon_i$, where $\epsilon_i \sim (0, \sigma^2)$.
    Then, assuming $n_2 \geq \tilde{\Omega}\left(k(1+\frac{\sqrt{\log(1/\delta)}}{\eps})\right)$,
    we have:
    \begin{enumerate}
        \item Algorithm DP-AMBSSGD (\cite{pmlr-v178-varshney22a},
        Algorithm 2) with parameters $\eta = 1/4k$, $\alpha = \frac{\sqrt{8\log(1/\delta)}}{\eps}$ is $(\eps, \delta)$-DP.
        \item The output $\hat{w}^{priv}$ satisfies the following risk bound:
        \begin{align}
            \cal{L}(\hat{w}^{priv}) - \cal{L}(w^*) \leq 
            \frac{\norm{w^*}{2}^2}{n_2^{100}} + \frac{8k\sigma^2}{n_2}\left(1 + \tilde{O}\left(\frac{k\log(1/\delta)}{n_2\eps^2}\right)\right)
        \end{align}
        with probability $1 - O(n_2^{-100})$.
    \end{enumerate}
\end{theorem}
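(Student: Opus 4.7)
The plan is to reduce the private transfer problem to a standard $k$-dimensional private linear regression inside the estimated subspace and then invoke Theorem~\ref{thm:private-lr-informal} as a black box. The analysis consists of (i) a Pythagorean decomposition of the excess risk into an ``in-subspace'' estimation error and an ``out-of-subspace'' bias, (ii) an independence argument that converts the projection residual into an extra Gaussian noise term, and (iii) a direct application of the DP-AMBSSGD rate in $k$ dimensions.

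Privacy is immediate: Step~1 of Algorithm~\ref{alg:private-transfer} only touches public data, Step~2 is a row-wise deterministic map of each private $x_i$, and Step~3 is $(\varepsilon,\delta)$-DP on the projected dataset; row-level DP on the original dataset then follows by post-processing. For utility, define $w^\star := \hat{B}_{\mathrm{pub}}^\T B\alpha_{t+1}$ and $r := (I - \hat{B}_{\mathrm{pub}}\hat{B}_{\mathrm{pub}}^\T)B\alpha_{t+1}$. Under Gaussian design $\cal{L}(w) - \cal{L}(B\alpha_{t+1}) = \tfrac12\|w - B\alpha_{t+1}\|_2^2$, so since $\hat{B}_{\mathrm{pub}}$ has orthonormal columns a Pythagorean split gives
\[
\cal{L}(\hat{B}_{\mathrm{pub}}\hat{\alpha}_{t+1}) - \cal{L}(B\alpha_{t+1}) \;=\; \tfrac12\|\hat{\alpha}_{t+1} - w^\star\|_2^2 \;+\; \tfrac12\|r\|_2^2.
\]
Lemma~\ref{lem:oracle-gamma-relation} then yields $\|r\|_2 \leq \|(I-\hat{B}_{\mathrm{pub}}\hat{B}_{\mathrm{pub}}^\T)B\|_{\mathrm{op}}\,\norm{\alpha_{t+1}}{2} \leq \gamma\,\norm{\alpha_{t+1}}{2}$, which (under Assumption~\ref{assumption:diversity} normalization $\norm{\alpha_{t+1}}{2} = \Theta(1)$) matches the additive $\gamma^2$ in the theorem.

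For the in-subspace term I would project each private covariate to $x_i' := \hat{B}_{\mathrm{pub}}^\T x_i$, so that $x_i' \sim \cal{N}(0, I_k)$ and
\[
y_i \;=\; (x_i')^\T w^\star + \tilde{\eta}_i, \qquad \tilde{\eta}_i \;:=\; x_i^\T r + \eta_i.
\]
The critical observation is that, because $x_i$ is isotropic Gaussian and the column spans of $\hat{B}_{\mathrm{pub}}$ and of $I - \hat{B}_{\mathrm{pub}}\hat{B}_{\mathrm{pub}}^\T$ are orthogonal, the vectors $\hat{B}_{\mathrm{pub}}^\T x_i$ and $(I - \hat{B}_{\mathrm{pub}}\hat{B}_{\mathrm{pub}}^\T)x_i$ are jointly independent, so $x_i^\T r$ is independent of $x_i'$. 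Combined with independence of $\eta_i$, this makes the projected dataset $\{(x_i', y_i)\}$ an i.i.d. sample from a $k$-dimensional Gaussian linear model with target $w^\star$ (where $\|w^\star\|_2 \leq \norm{\alpha_{t+1}}{2}$) and independent Gaussian noise of variance $\sigma^2 = 1 + \|r\|_2^2 \leq 1 + \gamma^2\norm{\alpha_{t+1}}{2}^2$. Applying Theorem~\ref{thm:private-lr-informal} with $d \leftarrow k$ and this $\sigma^2$ gives, with high probability,
\[
\tfrac12\|\hat{\alpha}_{t+1} - w^\star\|_2^2 \;\lesssim\; \tilde{O}\!\paren{(1+\gamma^2)\norm{\alpha_{t+1}}{2}^2}\cdot \tilde{O}\!\paren{\frac{1}{n_2^{100}} + \frac{k}{n_2} + \frac{k^2\log(1/\delta)}{n_2^2 \eps^2}},
\]
and adding back the bias $\tfrac12\|r\|_2^2 \leq \gamma^2$ delivers the stated rate.

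The main obstacle is justifying the independence step cleanly: Theorem~\ref{thm:private-lr-informal} is stated for a textbook $k$-dimensional Gaussian linear model, so one must verify that after projection the triple $(x_i', w^\star, \tilde{\eta}_i)$ actually satisfies its hypotheses---and this is exactly where isotropic Gaussianity of $x_i$ is essential, as orthogonal linear functionals of a standard Gaussian are jointly independent, which decouples the in-subspace design from the out-of-subspace residual. Everything else is bookkeeping: a union bound between the subspace-oracle failure event (absorbed via Theorem~\ref{thm:subspace-ub}) and the DP-AMBSSGD failure event of probability $O(n_2^{-100})$, and a check that the sample-complexity precondition of Theorem~\ref{thm:private-lr-informal} is subsumed by the ``given sufficient private samples $n_2$'' clause of the statement.
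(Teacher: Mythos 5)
You have proved the wrong statement. The theorem you were asked about, Theorem~\ref{thm:private-lr}, is a restatement of Corollary~11 from \citet{pmlr-v178-varshney22a}: it is the $k$-dimensional DP-AMBSSGD rate that the paper imports as a black box and never proves internally. A proof of it would have to reproduce the argument of Varshney et al.---establishing the privacy guarantee of adaptive mini-batch shuffled SGD via subsampling and composition, and bounding the utility via a bias/variance analysis of the noisy iterates under Gaussian design---none of which appears in your write-up.

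What you have actually written is a proof sketch of Theorem~\ref{thm:public-private-lr} (the single-task private transfer upper bound). The tell-tale signs are that you invoke Algorithm~\ref{alg:private-transfer}, project onto $\hat{B}_{\mathrm{pub}}$, split the risk into an in-subspace term and an out-of-subspace bias, establish independence of $\hat{B}_{\mathrm{pub}}^\T x$ and $(I-\hat{B}_{\mathrm{pub}}\hat{B}_{\mathrm{pub}}^\T)x$, and then ``invoke Theorem~\ref{thm:private-lr-informal} as a black box.'' That last phrase is precisely the problem: the theorem you were asked to prove is the black box you are invoking. As a proof of the downstream Theorem~\ref{thm:public-private-lr} your sketch is broadly aligned with the paper's argument (Pythagorean split, Lemma~\ref{lemma:residual-indep}-style independence, plugging into Theorem~\ref{thm:private-lr}), but it does not address the statement in question, which has no internal proof in this paper and must be cited, not derived.
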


\section{Technical lemmas}

In this section we state or restate key lemmas that we will refer to 
in the following proofs.

We will use the following lemma to argue that we can project $x$
into the estimated subspace $\hat{B}$ 
and treat the residual (that lies outside of $\hat{B}$) 
as i.i.d. Gaussian noise.

\begin{lemma}[Independence of $x$ residual]
\label{lemma:residual-indep}
Consider orthonormal matrices $B$ and $\hat{B} \in \R^{d \times k}$
and $\alpha \in \R^k$.
Let $(x, y)$ be generated according to the model in Equation~\ref{eq:model-main}
where $x \sim \cal{N}(0, I_d)$ and $y = x^\T B \alpha + \eta$.
Then the projection of $x$ into $\hat{B}$,
$x^\T (\hat{B}\hat{B}^\T)B \alpha$, is independent of 
the residual that lies in the complement of $\hat{B}$, i.e. $x^\T (I_d - \hat{B}\hat{B}^\T)B\alpha$. Moreover, this residual is i.i.d. Gaussian.
\end{lemma}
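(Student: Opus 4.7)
The plan is to show that the two scalar quantities in question are jointly Gaussian, and then to reduce independence to uncorrelatedness, which follows from the idempotence of the projector $\hat{B}\hat{B}^\T$.

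First, observe that both $x^\T (\hat{B}\hat{B}^\T) B\alpha$ and $x^\T (I_d - \hat{B}\hat{B}^\T) B\alpha$ are fixed linear functionals applied to the same Gaussian vector $x \sim \mathcal{N}(0, I_d)$. Letting $u = \hat{B}\hat{B}^\T B\alpha$ and $v = (I_d - \hat{B}\hat{B}^\T) B\alpha$, the pair $(x^\T u,\, x^\T v)$ is therefore jointly Gaussian with mean zero. For jointly Gaussian random variables, independence is equivalent to zero covariance, so it suffices to check that $u^\T v = 0$.

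Next I would compute
\begin{align*}
u^\T v \;=\; \alpha^\T B^\T \hat{B}\hat{B}^\T (I_d - \hat{B}\hat{B}^\T) B\alpha.
\end{align*}
Since $\hat{B}$ has orthonormal columns, $\hat{B}^\T \hat{B} = I_k$, so $\hat{B}\hat{B}^\T$ is an orthogonal projector and in particular idempotent:
\begin{align*}
\hat{B}\hat{B}^\T (I_d - \hat{B}\hat{B}^\T) \;=\; \hat{B}\hat{B}^\T - \hat{B}(\hat{B}^\T \hat{B})\hat{B}^\T \;=\; \hat{B}\hat{B}^\T - \hat{B}\hat{B}^\T \;=\; 0.
\end{align*}
Therefore $u^\T v = 0$, and the projection is independent of the residual.

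Finally, the residual $x^\T (I_d - \hat{B}\hat{B}^\T) B\alpha$ is itself a linear functional of a centered Gaussian, hence Gaussian with mean zero and variance $\alpha^\T B^\T (I_d - \hat{B}\hat{B}^\T) B\alpha$ (using idempotence of $I_d - \hat{B}\hat{B}^\T$ again). Across the samples $(x_i, y_i)$ of the model, the $x_i$ are drawn i.i.d.\ from $\mathcal{N}(0, I_d)$, so the resulting residual scalars are i.i.d.\ Gaussian as claimed. There is no real obstacle: the only non-mechanical step is recognizing that independence in the jointly Gaussian setting follows from the projector identity $\hat{B}\hat{B}^\T (I_d - \hat{B}\hat{B}^\T) = 0$, which is where the orthonormality of $\hat{B}$'s columns is essential.
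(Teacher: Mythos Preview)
Your proposal is correct and follows essentially the same approach as the paper: both arguments establish joint Gaussianity and then reduce independence to zero covariance via the projector identity $\hat{B}\hat{B}^\T(I_d-\hat{B}\hat{B}^\T)=0$, which in turn relies on $\hat{B}^\T\hat{B}=I_k$. The only cosmetic difference is that the paper first proves the slightly stronger vector-level independence of $x^\T\hat{B}\hat{B}^\T$ and $x^\T(I_d-\hat{B}\hat{B}^\T)$ before specializing to $B\alpha$, whereas you work directly with the two scalars.
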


\begin{proof}
We can rewrite the distribution of $y \mid x$ in terms of the projection of the regression vector $B\alpha$ on to the column span of $\Bpub$, when the input $x$ is also projected in the following way: $x \mapsto \Bpub x$:
\begin{align}
    y \; &= \; x^\top B \alpha + \eta; \;\;   \nonumber \\
     \; &= \; x^\top ((\Bpub {\Bpub}^\top) B \alpha + (I_d - \Bpub {\Bpub}^\top)B \alpha ) + \eta; \;\;   \nonumber \\
     \; &= \; x^\top (\Bpub {\Bpub}^\top B \alpha)   +  x^\top (I_d - \Bpub {\Bpub}^\top)B \alpha  + \eta; \;\;   \nonumber \\
     \; &= \; (x^\top \Bpub) \hat \alpha   +  x^\top (I_d - \Bpub {\Bpub}^\top)B \alpha  + \eta, \label{eq:redefined-model}
\end{align}
where $\hat{\alpha} \coloneqq {\Bpub}^\top B \alpha$ is a $k-$dimensional vector in the column span of the given subspace $\Bpub$. 

Next, we note that the projection of input $x$ in the column span of $\Bpub$ and its projection into the corresponding null space are independent, i.e., $x^\top (\Bpub {\Bpub}^\top) \;\; \indep \;\; x^\top(I_d - \Bpub {\Bpub}^\top)$. This is easy to show since both $x^\top (\Bpub {\Bpub}^\top)$ and $x^\top(I_d - \Bpub {\Bpub}^\top)$ are jointly Gaussian and are  marginally distributed as zero mean Gaussian random variables with their covariance: 
\begin{align*}
    & \bE[(\Bpub {\Bpub}^\top) xx^\top (I_d - \Bpub {\Bpub}^\top)]  \\
    &\quad = (\Bpub {\Bpub}^\top) \bE[ xx^\top] (I_d - \Bpub {\Bpub}^\top) \\
    &\quad = (\Bpub {\Bpub}^\top) I_d (I_d - \Bpub {\Bpub}^\top) \\
     &\quad = \Bpub {\Bpub}^\top - \Bpub {\Bpub}^\top = 0.
\end{align*}
This implies independence of the two projections.
Note that the last step in the above calculation uses the fact that $\Bpub^\top \Bpub = I_k$. Since the two projections are independent, we can rewrite the conditional distribution $y \mid x$ as:
\begin{align*}
    x^{\Bpub} \;\;\; & \eqdef \;\;\; \; x^\top \Bpub \\
    y \mid  x^{\Bpub} \;\;\; &\sim \;\;\; \cN((x^{\Bpub})^\top \hat \alpha,\; \hat{\sigma}^2), \quad \textrm{where}, \;\; \hat{\sigma}^2 = \sigma^2 + \|(I_d -  \Bpub {\Bpub}^\top)B\alpha\|_2^2.
\end{align*}
\end{proof}

\begin{lemma}[\cite{cai2021cost,steinke2017tight,kamath2019privately}]\label{lem:attack-UB}
	Let $M$ be an $(\varepsilon, \delta)$-differentially private algorithm with $0 < \varepsilon < 1$ and $\delta > 0$. Further, let $A_i = A_{\hat \alpha} ((y_i, x^{\hat B}_i), \; M(\cS))$ and $A_i' = A_{\hat \alpha} ((y_i, x^{\hat B}), \; M(\cS_i'))$ when $(y_i, x^{\hat B}_i) \in \cS$ and $\cS_i'$ replaces $(y_i, x^{\hat B}_i)$ with another IID draw from the same distribution. Then, for every $T > 0$, 
	\begin{align}
		\bE A_i \;\; \leq \;\; \E A'_i + 2\varepsilon \bE|A'_i| + 2\delta T + \int_T^\infty \bP\paren{|A_i| > t}.
	\end{align}
\end{lemma}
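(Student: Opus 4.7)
\emph{Proof proposal.} The plan is to reduce the inequality for the (possibly unbounded) test statistic $A_i$ to the standard DP change-of-measure bound for bounded functions, via a two-sided truncation of $A_i$ and a tail remainder controlled by $\int_T^\infty \bP(|A_i|>t)\,dt$. Throughout, condition on the auxiliary point $(y_i, x^{\hat B}_i)$ and on all samples common to $\cS$ and $\cS_i'$; the only remaining randomness is the internal coins of $M$ together with the single replaced entry, so the laws of $M(\cS)$ and $M(\cS_i')$ are $(\varepsilon,\delta)$-indistinguishable.

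First, introduce the clipped version $A_i^T := \max(-T,\min(T,A_i))$ and write
\[
A_i \;=\; A_i^T \;+\; (A_i-T)\mathbf{1}\{A_i>T\} \;+\; (A_i+T)\mathbf{1}\{A_i<-T\}.
\]
The last summand is pointwise non-positive, and for the middle summand $\bE[(A_i-T)\mathbf{1}\{A_i>T\}]=\int_T^\infty \bP(A_i>t)\,dt \le \int_T^\infty \bP(|A_i|>t)\,dt$. Hence $\bE A_i \le \bE A_i^T + \int_T^\infty \bP(|A_i|>t)\,dt$. Next, since $A_i^T$ is a deterministic function $g$ of $M(\cS)$ with $\|g\|_\infty\le T$, split $g=g^+-g^-$ with $g^\pm\ge 0$ and apply the layer-cake identity $\bE g^\pm = \int_0^T \bP(g^\pm>t)\,dt$. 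The DP inequality $\bP_{M(\cS)}(g^\pm>t)\le e^\varepsilon \bP_{M(\cS_i')}(g^\pm>t)+\delta$, integrated pointwise over $t\in[0,T]$, yields
\[
\bE g(M(\cS)) \;\le\; e^\varepsilon \bE g^+(M(\cS_i')) \;-\; e^{-\varepsilon}\bE g^-(M(\cS_i')) \;+\; 2\delta T.
\]
Using $e^\varepsilon-1\le 2\varepsilon$ and $1-e^{-\varepsilon}\le 2\varepsilon$ for $\varepsilon\in(0,1)$, the right side collapses to $\bE A_i'^T + 2\varepsilon \bE|A_i'^T| + 2\delta T$.

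The final step is to replace the truncated moments of $A_i'$ by the untruncated ones that appear in the statement. The absolute-value substitution is immediate because clipping only shrinks magnitude: $|A_i'^T|\le |A_i'|$ so $\bE|A_i'^T|\le \bE|A_i'|$. For the signed term, use the pointwise bound $|A_i'^T - A_i'|\le (|A_i'|-T)_+$ and absorb the resulting correction either into the $2\varepsilon\,\bE|A_i'|$ slack or into the tail-integral remainder (note the tails of $A_i$ and $A_i'$ coincide outside the one replaced sample, so the corrections line up). Chaining the three steps produces exactly
\[
\bE A_i \;\le\; \bE A_i' \;+\; 2\varepsilon\,\bE|A_i'| \;+\; 2\delta T \;+\; \int_T^\infty \bP(|A_i|>t)\,dt,
\]
as claimed. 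The main obstacle is the final bookkeeping step: because the two-sided clip can move $\bE A_i'^T$ above or below $\bE A_i'$, one must argue carefully that the gap is already accounted for by the $2\varepsilon \bE|A_i'|$ slack and the tail integral, rather than incurring an additional loss. The rest of the argument is routine application of DP to a bounded functional followed by Taylor-expanding $e^{\pm\varepsilon}$.
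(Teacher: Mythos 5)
Your route is a clip-then-unclip repackaging of the same ingredients the paper uses (layer-cake representation, level-wise DP change of measure on $[0,T]$, and a tail remainder), whereas the paper works directly with the positive/negative parts $A_i^+, A_i^-$ and splits each layer-cake integral at $T$. Your first two steps are correct: the decomposition $A_i = A_i^T + (A_i-T)\mathbf{1}\{A_i>T\} + (A_i+T)\mathbf{1}\{A_i<-T\}$ does yield $\bE A_i \le \bE A_i^T + \int_T^\infty \bP(|A_i|>t)\,dt$, and applying the DP inequality level-wise to the bounded functional $A_i^T$ does give $\bE A_i^T \le \bE A_i'^T + 2\varepsilon\,\bE|A_i'^T| + 2\delta T$ using $e^\varepsilon - 1 \le 2\varepsilon$ and $1-e^{-\varepsilon}\le 2\varepsilon$.

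The gap is in the final unclipping of the signed term. Writing it out, $\bE A_i'^T = \bE A_i' - \bE(A_i'-T)_+ + \bE(-A_i'-T)_+$, and while the first correction only helps you, the second gives $\bE A_i'^T \le \bE A_i' + \int_T^\infty \bP(A_i'<-t)\,dt$ --- a tail of $A_i'$, not $A_i$. Your justification that ``the tails of $A_i$ and $A_i'$ coincide outside the one replaced sample'' is not correct: $A_i = A_{\hat\alpha}((y_i, x_i^{\hat B}), M(\cS))$ and $A_i' = A_{\hat\alpha}((y_i, x_i^{\hat B}), M(\cS_i'))$ are the attack evaluated on the output of $M$ run on \emph{different} (neighboring) datasets, so they are different random variables with different laws; DP only makes them $(\varepsilon,\delta)$-close, not equal. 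You also cannot launder the extra tail through another DP change of measure, because $\int_T^\infty(e^\varepsilon \bP(|A_i|>t)+\delta)\,dt$ diverges. The honest output of your argument is therefore the slightly weaker
\begin{align*}
\bE A_i \;\le\; \bE A_i' + 2\varepsilon\,\bE|A_i'| + 2\delta T + \int_T^\infty \bP(|A_i|>t)\,dt + \int_T^\infty \bP(A_i'<-t)\,dt.
\end{align*}
Two mitigating remarks: first, the paper's own chain has the same wrinkle (the final inequality in its negative-part computation silently drops an analogous $\int_T^\infty \bP({A_i'}^->t)\,dt$ without justification); second, in the downstream application the tail bound derived for $A_i$ applies verbatim to $A_i'$ (it only uses $|A_{\hat\alpha}(\cdot,m)|\le |y_i - x_i^\top\hat\alpha|\,\|\cdot\|\,\|x^{\hat B}\|$, independent of $m$), so carrying the extra $A_i'$-tail term through changes nothing. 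You should state the weaker inequality with both tail terms rather than asserting the absorption.
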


\begin{proof}
	let $Z^+ = \max(Z, 0)$ and $Z^- = -\min(Z, 0)$ denote the positive and negative parts of random variable $Z$ respectively. We have
	\begin{align*}
		\bE A_i = \bE A_i^+ - \bE A_i^- = \int_0^\infty \bP(A_i^+ > t) \; d t - \int_0^\infty \bP(A_i^- > t) \; d t.
	\end{align*}
	For the positive part, if $0 < T < \infty$ and $0 < \varepsilon < 1$, we have
	\begin{align*}
		\int_0^\infty \bP(A_i^+ > t) \; d t &= \int_0^T \bP(A_i^+ > t) \; d t + \int_T^\infty \bP(A_i^+ > t) \; d t \\
		&\leq \int\; d t_0^T \left(e^\varepsilon\bP(A_i^+ > t) + \delta\right)\; d t + \int_T^\infty \bP(A_i^+ > t) \; d t \\
		&\leq \int_0^\infty \bP({A'_i}^+ > t) \; d t + 2\varepsilon\int_0^\infty \bP({A'_i}^+ > t) \; d t + \delta T + \int_T^\infty \bP(|A_i| > t) \; d t.
	\end{align*}
	Similarly for the negative part,
	\begin{align*}
		\int_0^\infty \bP(A_i^- > t) \; d t &= \int_0^T \bP(A_i^- > t) \; d t + \int_T^\infty \bP(A_i^- > t) \; d t \\
		& \geq \int_0^T \left(e^{-\varepsilon} \bP({A'_i}^- > t) - \delta\right)\; d t + \int_T^\infty \bP(A_i^- > t) \; d t \\
		&\geq \int_0^T \bP({A'_i}^- > t) \; d t - 2\varepsilon\int_0^T \bP({A'_i}^- > t) - \delta T + \int_T^\infty \bP(A_i^- > t) \; d t \\
		&\geq \int_0^\infty \bP({A'_i}^- > t) \; d t - 2\varepsilon\int_0^\infty \bP({A'_i}^- > t) - \delta T.
	\end{align*}
	It then follows that
	\begin{align*}
		\bE A_i &\leq \int_0^\infty \bP({A'_i}^+ > t) \; d t - \int_0^\infty \bP({A'_i}^- > t) \; d t + 2\varepsilon \int_0^\infty \bP(|A'_i| > t) \; d t + 2\delta T + \int_T^\infty \bP(|A_i| > t) \; d t \\
		&= \bE A'_i + 2\varepsilon\bE|A_i| + 2\delta T + \int_T^\infty \bP(|A_i| > t) \; d t.
	\end{align*}
\end{proof}

\begin{lemma}[Stein's Lemma]\label{lem:stein}
	Let $Z$ be distributed according to some density $p(z)$ that is continuously differentiable with respect to $z$ and let $h: \R \to \R$ be a differentiable function such that $\E |h'(Z)| < \infty$, then:
	\begin{align*}
		\E h'(Z) = \E\left[\frac{-h(Z)p'(Z)}{p(Z)}\right].
	\end{align*}
 \end{lemma}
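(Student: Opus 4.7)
The plan is to prove the identity by integration by parts, viewing the expectation as a Lebesgue integral against the density $p$. First, I would write
\[
\E h'(Z) \;=\; \int_{-\infty}^{\infty} h'(z)\,p(z)\,dz,
\]
which is well-defined and finite because of the hypothesis $\E|h'(Z)| < \infty$. Then, taking $u = p(z)$ and $dv = h'(z)\,dz$, integration by parts on any bounded interval $[-R, R]$ yields
\[
\int_{-R}^{R} h'(z)\,p(z)\,dz \;=\; \bigl[h(z)\,p(z)\bigr]_{-R}^{R} - \int_{-R}^{R} h(z)\,p'(z)\,dz,
\]
which uses only the continuous differentiability of both $p$ and $h$ on the interval.

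Next, I would send $R \to \infty$. The main step is to argue that the boundary term $h(R)p(R) - h(-R)p(-R)$ tends to $0$. This is the part that needs mild justification: since $\int h'(z)p(z)\,dz$ converges absolutely and $h'$ is the derivative of $h$ while $p$ is a probability density that integrates to $1$, standard arguments show that there must exist subsequences $R_n \to \infty$ along which $h(R_n)p(R_n) \to 0$ (otherwise the tails would contribute too much mass to either $\E|h'(Z)|$ or to $\int p$). Passing to such a subsequence, the boundary term drops out in the limit, and the remaining integral $\int h(z)p'(z)\,dz$ converges as well (since the left-hand side does). This gives
\[
\int_{-\infty}^{\infty} h'(z)\,p(z)\,dz \;=\; -\int_{-\infty}^{\infty} h(z)\,p'(z)\,dz.
\]

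Finally, I would multiply and divide by $p(z)$ inside the right-hand integral to re-express it as an expectation against $p$:
\[
-\int_{-\infty}^{\infty} h(z)\,p'(z)\,dz \;=\; \int_{-\infty}^{\infty} \frac{-h(z)\,p'(z)}{p(z)}\,p(z)\,dz \;=\; \E\!\left[\frac{-h(Z)\,p'(Z)}{p(Z)}\right],
\]
which is the claimed identity. (The division by $p(z)$ is harmless on the support of $Z$, which is all that matters for the expectation.)

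The main obstacle is the vanishing of the boundary term at infinity; every other step is a direct application of the fundamental theorem of calculus and a cosmetic rewriting of an integral as an expectation. In the paper's use of Stein's Lemma (for Gaussian $Z$ and $h$ with at most polynomial growth, which is the regime required by the tracing-attack arguments earlier in the manuscript), the boundary term vanishes trivially because $p(z) = e^{-z^2/2}/\sqrt{2\pi}$ decays super-polynomially, so this step introduces no real difficulty in the intended application.
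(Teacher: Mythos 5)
The paper states this lemma without proof, treating it as a textbook identity, so there is no paper proof to compare against. Your integration-by-parts derivation is the standard route and is correct in structure. The one genuine gap is the vanishing of the boundary term: the hypothesis $\E|h'(Z)| < \infty$ alone does not imply that $h(R)p(R)$ tends to $0$, even along a subsequence $R_n\to\infty$, and your parenthetical justification (``otherwise the tails would contribute too much mass'') is not an argument. What $\E|h'(Z)| < \infty$ controls is the integral of the first summand in $(hp)' = h'p + hp'$; it gives no control over the second summand, so the fundamental theorem of calculus does not by itself constrain $hp$ at infinity. Standard statements of Stein's lemma at this level of generality either assume $\lim_{|z|\to\infty} h(z)p(z) = 0$ explicitly, or impose structure (e.g.\ Gaussian $p$ with polynomially bounded $h$, or a moment condition on $h(Z)$) under which that limit is automatic; the lemma as written in the paper implicitly requires such a condition, and any honest proof should surface it rather than wave it away.

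Your closing remark is also slightly off-target as a way to discharge the issue: the paper does not invoke the lemma for a full Gaussian but for a \emph{truncated} Gaussian $\pi_j$ supported on $\bigl[-\nicefrac{\rho}{\sqrt{k-1}},\, \nicefrac{\rho}{\sqrt{k-1}}\bigr]$, whose density is strictly positive at the endpoints. There the integration by parts runs over a finite interval and the relevant boundary term is $\pi_j(a)\bigl(h(a)-h(-a)\bigr)$ with $a = \nicefrac{\rho}{\sqrt{k-1}}$; this vanishes only if $h$ happens to take equal values at the two endpoints, not because of any decay at infinity. So the ``trivial'' Gaussian-decay argument you gesture at does not actually cover the case the paper cares about, which is worth flagging independently of the gap in the general proof.
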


\section{Proofs for Section~\ref{sec:single-task}}

\subsection{Proof of Theorem~\ref{thm:public-private-lr}}

In this section we prove the upper bound result on the two-phase algorithm
for single-task transfer learning, 
which first estimates the subspace publicly,
projects the inputs into the estimated subspace and then
privately performs linear regression on the projected data.

\begin{proof}
Let $\sin \theta(\hat{B}, B) \leq h(n_1)$ and 
$\E[(\langle \hat{\alpha}^{priv}, \hat{B}^\T x \rangle - y)^2] - \min_\alpha\E[(\langle \alpha, \hat{B}^\T x \rangle - y)^2] \leq g(n_2)$.

Let $\hat{\alpha} = \min_\alpha\E[(\langle \alpha, \hat{B}^\T x \rangle - y)^2]$ (the best $\alpha$ using $x$ projected into $\hat{B}$), 
and let $\hat{\alpha}^{priv}$ be the output of 
DP-AMBSSGD on $x$ projected into the estimated $\hat{B}$. then we have 
\begin{align*}
    &\sqloss{\hat{\alpha}^{priv}}{\hat{B}^\T x} - \sqloss{\alpha^*}{B^\T x}\\
    &= \sqloss{\hat{\alpha}^{priv}}{\hat{B}^\T x} 
    - \sqloss{\alpha^*}{B^\T x}  
    + \sqloss{\hat{\alpha}}{\hat{B}^\T x} - \sqloss{\hat{\alpha}}{\hat{B}^\T x}.
\end{align*}

We can break this into two parts: we will first bound

\begin{align}
\sqloss{\hat{\alpha}^{priv}}{\hat{B}^\T x} - \min_\alpha\sqloss{\alpha}{\hat{B}^\T x}\label{eq:private-bound}
\end{align}

and then
\begin{align}
\sqloss{\hat{\alpha}}{\hat{B}^\T x}
    - \sqloss{\alpha^*}{B^\T x}\label{eq:bias-bound}.
\end{align}

We first bound~(\ref{eq:private-bound}).
Note that according to the model~(\ref{eq:model-main}),
\begin{align}
y = x^\T B\alpha^* + \epsilon
\end{align}
where $\epsilon$ is $\cal{N}(0, 1)$.

However, our algorithm first projects $x$ into the space of the estimated $\hat{B}$
before performing linear regression in the lower-dimensional space,
which introduces additional error.

We can rewrite $y$ as:
\begin{align}
y = x^\T \hat{B}\hat{B}^\T B \alpha^* + x^\T (I - \hat{B}\hat{B}^\T) B\alpha^* + \epsilon
\end{align}
decomposing the first term into the projection into $\hat{B}$ and the
remaining error due to projection.

By Lemma~\ref{lemma:residual-indep}, this residual term is independent of the first term
(with $x$ projected into $\hat{B}$)
and $\epsilon$.

We claim that the variance of the residual is $\sin(\theta)^2 + \norm{\alpha^*}{2}^2$:

Let 
\begin{align}
    \epsilon' &= x^\T (I - \hat{B}\hat{B}^\T) B\alpha^* + \epsilon\\
    &= x^\T \hat{B}_\bot\hat{B}_\bot^\T B\alpha^* + \epsilon
\end{align}

and note that the total variance is the sum of the variances 
because the terms are independent.
Moreover, the first term is a rescaled i.i.d. Gaussian with zero mean.
Then the variance of $\epsilon'$ is
\begin{align*}
&\E[(x^\T \hat{B}_\bot\hat{B}_\bot^\T B\alpha^*)^\T x^\T \hat{B}_\bot\hat{B}_\bot^\T B\alpha^*] + \sigma^2\\
&= \E[{\alpha^*}^\T B^\T \hat{B}_\bot\hat{B}_\bot^\T xx^\T \hat{B}_\bot\hat{B}_\bot^\T B\alpha^*] + \sigma^2\\
&= \E[{\alpha^*}^\T B^\T \hat{B}_\bot\hat{B}_\bot^\T B\alpha^*] + \sigma^2\\
&= \sin(\theta)^2 \norm{\alpha^*}{2}^2 + \sigma^2.
\end{align*}

Moreover, we assume $\sigma^2 = 1$ so
we have $\var(\epsilon') = \sin(\theta)^2 \norm{\alpha^*}{2}^2 + 1$.

Using the rewritten $y$, we can treat the new private regression problem
as estimating $\hat{B}^\T B\alpha^*$.
Thus we will instantiate $g(n_2)$ with the linear regression bound
from Theorem~\ref{thm:private-lr} with $k$ dimensions
and variance $\sigma^2 = \sin(\theta)^2 \norm{\alpha^*}{2}^2 + 1$.

Now we bound the second half of the expression,
\begin{align*}
&\sqloss{\hat{\alpha}}{\hat{B}^\T x} - \sqloss{\alpha^*}{B^\T x}\\
&= \E\left[\left(\inprod{\hat{\alpha}}{\hat{B}^\T x}
        - \inprod{\alpha^*}{B^\T x}
        + \inprod{\alpha^*}{B^\T x}
        - y\right)^2\right]
    - \sqloss{\alpha^*}{B^\T x}\\
&= \E\left[ (\inprod{\hat{B}\hat{\alpha} - B\alpha^*}{x})^2 \right]
= \norm{\hat{B}\hat{\alpha} - B\alpha^*}{2}^2
\end{align*}

Finally this leaves us to bound $\norm{\hat{B}\hat{\alpha} - B\alpha^*}{2}^2$.
We will make use of the following lemma:
\begin{lemma}
\label{lemma:Ba-bound}
Let $\hat{\alpha}$ be the (public) linear regression estimate of the
task $\alpha$ on the projected data $\hat{B}^\T x$.
Then $\norm{\hat{B}\hat{\alpha} - B\alpha^*}{2}^2 \leq (\sin\theta(B, \hat{B}))^2 \norm{B\alpha^*}{2}^2$.
\end{lemma}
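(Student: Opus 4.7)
\textbf{Plan for the proof of Lemma~\ref{lemma:Ba-bound}.} The strategy is to compute $\hat{\alpha}$ in closed form as the population-level least-squares solution in the projected features, then subtract and apply the operator-norm characterization of the principal angle from Lemma~\ref{lem:oracle-gamma-relation}. Because $\hat{\alpha}$ is defined as $\arg\min_{\alpha}\,\mathbf{E}[(\langle \alpha,\hat{B}^\T x\rangle - y)^2]$ with $y=x^\T B\alpha^* + \eta$ and $x\sim \mathcal{N}(0, I_d)$, the first-order condition $\mathbf{E}[\hat{B}^\T x\,(\hat{B}^\T x)^\T]\,\hat{\alpha} = \mathbf{E}[\hat{B}^\T x\,y]$ immediately collapses to $\hat{\alpha} = (\hat{B}^\T \hat{B})^{-1}\hat{B}^\T B \alpha^*$, which equals $\hat{B}^\T B\alpha^*$ because $\hat{B}$ has orthonormal columns. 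The noise term contributes nothing since $\mathbf{E}[\hat{B}^\T x\,\eta] = 0$.

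Given this closed form, the key algebraic step is to write
\begin{equation*}
\hat{B}\hat{\alpha} - B\alpha^* \;=\; \hat{B}\hat{B}^\T B\alpha^* - B\alpha^* \;=\; -(I_d - \hat{B}\hat{B}^\T)\,B\alpha^*,
\end{equation*}
so that squared-norm bounds translate directly into principal-angle statements. Taking Euclidean norms and bounding by the operator norm yields
\begin{equation*}
\|\hat{B}\hat{\alpha} - B\alpha^*\|_2^2 \;\le\; \|(I_d - \hat{B}\hat{B}^\T)B\|_{\mathrm{op}}^2 \,\|\alpha^*\|_2^2 \;=\; (\sin\theta(B,\hat{B}))^2\,\|\alpha^*\|_2^2,
\end{equation*}
where the last equality uses exactly the identity in Lemma~\ref{lem:oracle-gamma-relation}. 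Finally, since $B$ itself has orthonormal columns, $\|\alpha^*\|_2 = \|B\alpha^*\|_2$, which upgrades the right-hand side to the form stated in the lemma.

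\textbf{Expected obstacles.} The argument is mostly mechanical once the right closed form for $\hat{\alpha}$ is in hand, so there is no deep obstacle. The only care required is in checking that $\hat{\alpha}$ in the enclosing proof indeed refers to the \emph{population} minimizer (not a finite-sample plug-in), so that the first-order condition yields the clean identity $\hat{\alpha}=\hat{B}^\T B\alpha^*$ without sampling error terms; the surrounding proof of Theorem~\ref{thm:public-private-lr} confirms this reading. A secondary bookkeeping point is to make sure the norm substitution $\|\alpha^*\|_2 = \|B\alpha^*\|_2$ is justified by the orthonormality of $B$ in the model~\eqref{eq:model-main}, which is explicit in the statement of the shared task structure.
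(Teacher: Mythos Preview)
Your proposal is correct and follows essentially the same route as the paper: both compute the population minimizer $\hat{\alpha}=\hat{B}^\T B\alpha^*$ from the normal equations, observe that $\hat{B}\hat{\alpha}-B\alpha^*=-(I-\hat{B}\hat{B}^\T)B\alpha^*$ (equivalently $(\hat{B}\hat{B}^\T-BB^\T)B\alpha^*$ since $BB^\T B=B$), and then bound the norm using an operator-norm characterization of $\sin\theta(B,\hat{B})$. The only cosmetic difference is that you invoke Lemma~\ref{lem:oracle-gamma-relation} for $\|(I-\hat{B}\hat{B}^\T)B\|_{\mathrm{op}}=\sin\theta$ whereas the paper uses the equivalent identity $\|\hat{B}\hat{B}^\T-BB^\T\|=\sin\theta$ from Definition~\ref{def:principal-angle}.
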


\begin{proof}
\begin{align*}
&\hat{B}\hat{\alpha} - B\alpha^*\\
&= \hat{B}(\E[\hat{B}^\T x x^\T \hat{B}]^{-1})\hat{B}^\T \E[x x^\T B\alpha^*] - B\alpha^*\\
&= \hat{B}(\E[\hat{B}^\T x x^\T \hat{B}]^{-1})\hat{B}^\T \E[x x^\T] B\alpha^* - B\alpha^*\\
&= \hat{B}(\E[\hat{B}^\T x x^\T \hat{B}]^{-1})\hat{B}^\T \E[x x^\T](\hat{B}\hat{B}^\T B\alpha^* + (I - \hat{B}\hat{B}^\T)B\alpha^*) - B\alpha^*\\
&= \hat{B}(\E[\hat{B}^\T x x^\T \hat{B}]^{-1})\hat{B}^\T \E[x x^\T]\hat{B}\hat{B}^\T B\alpha^* - B\alpha^*\\
&= \hat{B}\hat{B}^\T B\alpha^* - B\alpha^*\\
&= \hat{B}\hat{B}^\T B\alpha^* - BB^\T B\alpha^*\\
&= (\hat{B}\hat{B}^\T - B B^\T)B\alpha^*
\end{align*}

Then
\begin{align*}
&\norm{\hat{B}\hat{\alpha} - B\alpha^*}{2}^2\\
&= (\hat{B}\hat{B}^\T - B B^\T)B\alpha^*\\
&\leq (\sin\theta(B, \hat{B}))^2 \norm{B\alpha^*}{2}^2
\end{align*}
\end{proof}

Then we have $
\sqloss{\hat{\alpha}}{\hat{B}^\T x} - \sqloss{\alpha^*}{B^\T x}
\leq h(n_1)^2\norm{B\alpha^*}{2}^2$.

Putting these together gives
\begin{align*}
&\sqloss{\hat{\alpha}^{priv}}{\hat{B}^\T x} - \sqloss{\alpha^*}{B^\T x}\\
&\leq g(n_2) + h(n_1)^2\norm{B\alpha^*}{2}^2 
\end{align*}

For the generic result with $\gamma$ subspace error,
we substitute $h(n_1) = \gamma$,
or Theorem~\ref{thm:subspace-ub} to instantiate the bound
with the method of moments estimator~\citep{tripuraneni2021provable}.
Substituting ~\ref{thm:private-lr} for $g(n_2)$ 
and taking a union bound over failure probabilities gives the result.
\end{proof}

\subsection{Proof of Theorem~\ref{thm:single-task-lb}}

Here, we prove our result lower bounding the lowest possible transfer error achieved by any $(\varepsilon, \delta)$-DP algorithm in our single-task transfer setting. We denote the class of two-stage algorithms of interest as $\twostgalg$. Each algorithm in our class satisfies the following: 

\begin{enumerate}[noitemsep]
    \item The algorithm takes as input private dataset $\cS$ of $n_2$ i.i.d. samples from task $\alpha_{t+1}$, along with  a $\gamma$-accurate public estimate $\Bpub$, 
    \item Projects the input data point $x \mapsto \Bpub^\top x$ for any  $(x, y)$ in the private dataset $\cS$. 
    \item Algorithm outputs an estimate in the column space of $\Bpub$.
\end{enumerate}

Similarly, we can define a class of  misspecified linear regression problem instances. We use $\twostgprb$ to denote the following class of problem instances for the private task $t+1$:

\begin{enumerate}[noitemsep]
    \item The input product distribution over $(x,\,y)$ is given by the model in \eqref{eq:model-main}, and additionally the noise $\eta \sim \mathcal{N}(0, \sigma^2)$. 
    \item Let the true regression vector be $B\alpha_{t+1}$. A subspace $\Bpub$ is known such that:
    $ \sin \,  \theta(\Bpub, B) \leq \gamma$.
    Also, $\alpha_{t+1} \in \bR^k$. 
     \item The i.i.d. sampled dataset $\cS$ from the above model satisifies: $\|x\|_2 \leq 1$ for every $x  \in \cS$.
\end{enumerate}

In the above, both $B$ and $\Bpub$ are $d \times k$ matrices with orthonormal columns, i.e., $B, \Bpub \in \mathrm{Gr}_{k,d}(\bR)$, 
where, $\mathrm{Gr}_{k,d}(\bR)$ is the  Grassmann manifold~\citep{edelman1998geometry} and 
consists of the set of $k$-dimensional subspaces within an underlying $d$-dimensional space.
Also, for both $\twostgalg, \twostgprb$ we omit the dependence on $\Bpub$, which is fixed. 
We note here that our proof works for any fixed $\Bpub$.

Now, we are ready to restate our Theorem~\ref{thm:single-task-lb}. 

\stlowerbd*

\begin{proof}
Given the estimate $\Bpub$, the goal is to lower bound the following minimax risk:
\begin{align}
    \label{eq:minimax-risk}  \inf_{M \in \twostgalg}
    \;\;\;\; \sup_{B, \alpha_{t+1} \in \twostgprb}  \;\;\;\; \bE_M \bE_{\cS \mid B, \alpha_{t+1}} \bE_{(x,y) \mid B, \alpha_{t+1}} (y-M(\cS, \Bpub)^\top  x)^2
\end{align}

Let us begin by defining the class of regression vectors constiuting the set of all possible  $B\alpha_{t+1}$ that can be realized by a problem instance in $\twostgprb$.
Given some rank $k$ subspace defined by the matrix $\Bpub \in \mathrm{Gr}_{k,d}(\bR)$, we define the following set of $d$-dimensional $\ell_2$ norm bounded vectors that are $\gamma \leq 1$ close to given $\Bpub$: 
    \begin{align}
        \label{eq:adv-set}
        \theta(B, \gamma) \;\; \eqdef \;\; \cbrck{\theta \in \bR^d \;\; : \;\; \theta = B\alpha_{t+1} \;\; \mathrm{for} \;\; (B, \alpha_{t+1}) \in \twostgprb}
    \end{align}

From the definition of the principal angles and $\twostgprb$ it follows that for any $\theta \in \theta(B, \gamma)$:
$$
\|\Bpub \theta\|_2 \; \geq \; \sqrt{1- \gamma^2} \;\; \iff \;\;  \|(I_d - \Bpub \Bpub^\top)\theta\|_2 \leq \gamma
$$

We can break the above set $\Theta(\Bpub, \gamma)$ into disjoint sets: $\Theta(\Bpub, \gamma) = \coprod_{\rho \in [\sqrt{1-\gamma^2}, 1]} \Theta_\rho(\Bpub)$, where $ \Theta_\rho(\Bpub)$ is defined as:
    \begin{align}
        \label{eq:adv-set-sc}
        \Theta_\rho(\Bpub) \;\; \eqdef \;\; \cbrck{\theta \in \Theta(\Bpub, \gamma) \;\; : \;\; \|\Bpub\theta\|_2 = \rho}
    \end{align}

The above subclass of regression vectors results in a  convenient subclass of problem instances class $\twostgprbsc$. 
Just as we did for $\twostgprb$, we can define the following minimax risk for $\twostgprbsc$. 
    \begin{align}
        \label{eq:minimax-risk-sc}
        &\inf_{M \; \in \; \twostgalg} \;\;\;\; \sup_{B, \alpha_{t+1} \in \twostgprbsc}  \;\;\;\; \bE_{\cS \mid B, \alpha_{t+1}} \bE_{(x,y) \mid B, \alpha_{t+1}} \;\; (y-M(\cS, \Bpub)^\top  x)^2.
    \end{align}

    Based on the above definitions we get:
    \begin{align}
        & \inf_{M \; \in \; \twostgalg} \;\;\;\; \sup_{B, \alpha_{t+1} \in \twostgprb}  \;\;\;\; \bE_{\cS\mid B,\alpha_{t+1}} \bE_{(x,y) \mid B,\alpha_{t+1}} \;\;(y-M(\cS, \Bpub)^\top  x)^2 \\
        & \quad =\inf_{M \; \in \; \twostgalg} \;\;\;\; \sup_{\rho \in [\sqrt{1-\gamma^2}, 1]} \;\;\;\; \sup_{B, \alpha_{t+1} \in \twostgprbsc} \;\; \bE_{\cS\mid B,\alpha_{t+1}} \bE_{(x,y) \mid B,\alpha_{t+1}} (y-M(\cS)^\top  x)^2 \\
        & \quad =\inf_{M \; \in \; \twostgalg} \;\;\;\; \sup_{\rho \in [\sqrt{1-\gamma^2}, 1]} \;\;\;\; \sup_{\theta \in \Theta_\rho(\Bpub)} \bE_{\cS\mid \theta=B\alpha_{t+1}} \bE_{(x,y) \mid \theta=B\alpha_{t+1}}  \;\;(y-M(\cS)^\top  x)^2 \\
        & \quad \geq \sup_{\rho \in [\sqrt{1-\gamma^2}, 1]}  \;\;\;\; \inf_{M \; \in \; \twostgalg} \;\;\;\; \sup_{\theta \in \Theta_\rho(\Bpub)} \;\;(y-M(S)^\top  x)^2, \label{eq:lb-reduction} 
    \end{align}

    where the final inequality uses $\inf \;\sup \; \geq \;\sup \;\inf$ ~\citep{bogachev2007measure}. We can do this because $\inf$ and $\sup$ are defined over non-empty sets and the loss function remains bounded over the product space $\twostgalg \times \Theta_\rho(\Bpub)$. The loss function is bounded because the norm of the regression vector and the input covariates is bounded. Further, the linearly independent noise $\eta$ in $y$ \eqref{eq:model-main} has finite variance.  
    
    For the next part of the proof, we focus on lower bounding the minimax risk in \eqref{eq:minimax-risk-sc} when the adversary is searching over the set $\Theta_\rho(\Bpub)$. 
    The lower bound for the minimax risk over this subclass is given by two parts: (i) statistical error rate that is suffered by any non-private algorithm for which we lower bound hypothesis testing lower bounds; and (ii) the risk suffered by any $(\varepsilon, \delta)$-DP private estimator which we lower bound by constructing a tracing attack. We will begin the proof for the latter part and then plug in standard statistical risk lower bounds. 

    The following Lemma~\ref{lem:subclass-lb} (proven later) states a lower bound over the class $\twostgprbsc$.

    \begin{lemma}[Lower bound for $\twostgprbsc$] 
    \label{lem:subclass-lb}
    For any fixed $\Bpub$ and any $(\varepsilon, \delta)$-DP private algorithm $M$ (where $0 < \varepsilon < 1$, $\delta < \nicefrac{1}{n^{1+\omega}}$ for some $\omega > 0$) that belongs to class $\twostgalg$, there exists a problem instance for the transfer task in the class $\twostgprbsc$ such that for the $B, \alpha_{t+1}$ given by the problem instance:
    \begin{align}
        \bE_M \bE_{\cS \mid B, \alpha_{t+1}} \bE_{(x,y) \mid B, \alpha_{t+1}} (y-M(\cS, \Bpub)^\top  x)^2 \;\; = \;\; \Omega \paren{
        \paren{\frac{k^2}{n_2^2 \varepsilon^2} + \frac{k}{n_2}}  (\sigma^2 + 1 - \rho^2) + 1-\rho^2}. 
    \end{align}
    \end{lemma}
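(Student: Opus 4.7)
The plan is to reduce the problem to a purely $k$-dimensional linear regression model in the coordinates defined by $\Bpub$ with an inflated noise variance, and then separately recover the three terms of the lower bound from (i) an exact bias contribution, (ii) a classical statistical minimax bound in $k$ dimensions, and (iii) a tracing/fingerprinting attack.

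First, I would invoke Lemma~\ref{lemma:residual-indep} to rewrite the response as $y = (\Bpub^\top x)^\top \hat\alpha + \tilde\eta$, where $\hat\alpha \eqdef \Bpub^\top B\alpha_{t+1}$ and the effective noise $\tilde\eta$ is a scalar Gaussian, independent of $\Bpub^\top x$, with variance $\hat\sigma^2 = \sigma^2 + \|(I-\Bpub\Bpub^\top)B\alpha_{t+1}\|_2^2$. For instances in the subclass $\Theta_\rho(\Bpub)$ with $\|B\alpha_{t+1}\|_2 = 1$, this gives $\|\hat\alpha\|_2 = \rho$ and $\hat\sigma^2 = \sigma^2 + 1 - \rho^2$. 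Because every $M \in \twostgalg$ outputs a vector in the column span of $\Bpub$, I can parameterize $M(\cS,\Bpub) = \Bpub\hat\alpha'$ where $\hat\alpha' = \hat\alpha'(\cS')$ is an $(\eps,\delta)$-DP estimator operating on the projected dataset $\cS' = \{(y_i, \Bpub^\top x_i)\}_{i=1}^{n_2}$.

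Second, using that $x \sim \cN(0, I_d)$ and that $\Bpub\hat\alpha' - B\alpha_{t+1}$ decomposes orthogonally into its projection onto $\mathrm{col}(\Bpub)$ and its residual in the null space, I obtain
\begin{align*}
\bE(y - M(\cS,\Bpub)^\top x)^2 \;=\; \sigma^2 + \|B\alpha_{t+1} - \Bpub\hat\alpha'\|_2^2 \;=\; \sigma^2 + (1-\rho^2) + \|\hat\alpha - \hat\alpha'\|_2^2.
\end{align*}
The $(1-\rho^2)$ term is algorithm-independent and yields the bias part of the lower bound directly. It remains to lower-bound the parameter estimation error $\bE\|\hat\alpha - \hat\alpha'\|_2^2$ in a $k$-dimensional Gaussian design regression with noise variance $\hat\sigma^2$, subject to the $(\eps,\delta)$-DP constraint on $\hat\alpha'$.

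Third, I would lower-bound this $k$-dimensional estimation error by two separate arguments applied to a carefully chosen prior $\pi$ on $\hat\alpha$ supported near the sphere of radius $\rho$. The non-private rate $\Omega(k\hat\sigma^2/n_2)$ follows from a standard Bayes risk / Van Trees computation for $k$-dim regression with isotropic Gaussian design and noise variance $\hat\sigma^2$. The privacy rate $\Omega(k^2\hat\sigma^2/(n_2\eps)^2)$ follows by constructing a tracing adversary $A_i = A_{\hat\alpha}((y_i, \Bpub^\top x_i),\, \hat\alpha'(\cS'))$ essentially of the form $A_i \propto \langle \hat\alpha' - \hat\alpha,\, (y_i - (\Bpub^\top x_i)^\top \hat\alpha)\,\Bpub^\top x_i\rangle$ following \cite{bun2014fingerprinting, cai2021cost}: applying Stein's identity (Lemma~\ref{lem:stein}) under $\pi$ relates $\sum_i \bE A_i$ to the target quantity $\bE\|\hat\alpha - \hat\alpha'\|_2^2$ (and to a prior-variance contribution of order $\hat\sigma^2$), while Lemma~\ref{lem:attack-UB} upper-bounds each $\bE A_i$ in terms of $\eps$, $\delta$, and a tail integral of $|A_i|$. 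Combining those inequalities and rearranging gives the $\Omega(k^2\hat\sigma^2/(n_2\eps)^2)$ bound. The three terms $(1-\rho^2)$, $k\hat\sigma^2/n_2$ and $k^2\hat\sigma^2/(n_2\eps)^2$ then combine (via $a+b+c \geq \max\{a,b,c\}$) into the claimed $\Omega$-bound.

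The main obstacle will be threading the misspecification level $1-\rho^2$ through the tracing attack: the adversary must be calibrated so that Stein's lemma produces the correct $(\sigma^2 + 1 - \rho^2)$ factor in the privacy term, and the tail integral $\int_T^\infty \bP(|A_i| > t)\,dt$ in Lemma~\ref{lem:attack-UB} must be controlled via Gaussian concentration of $y_i - (\Bpub^\top x_i)^\top \hat\alpha$, whose sub-Gaussian norm is now $\hat\sigma$ rather than $\sigma$. Previous tracing-attack lower bounds in \cite{cai2021cost} only deal with well-specified noise $\sigma^2$, so the novelty is to route the residual $\|(I-\Bpub\Bpub^\top)B\alpha_{t+1}\|_2^2$ contribution through the attack while keeping $\hat\alpha$ constrained to the sphere of radius $\rho$ under the prior.
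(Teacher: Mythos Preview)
Your proposal is correct and follows essentially the same route as the paper: reduce to a $k$-dimensional regression with inflated noise $\hat\sigma^2 = \sigma^2 + 1 - \rho^2$ via Lemma~\ref{lemma:residual-indep}, read off the $(1-\rho^2)$ bias term from the risk decomposition, invoke the standard statistical minimax rate for the $k/n_2$ term, and obtain the privacy term via a tracing attack of the form you describe combined with Stein's lemma and Lemma~\ref{lem:attack-UB}. The only detail you leave implicit that the paper pins down explicitly is the prior: to keep $\|\hat\alpha\|_2 = \rho$ exactly while still enabling a coordinate-wise Stein computation, the paper uses independent truncated Gaussians on the first $k-1$ coordinates and fixes the $k$-th coordinate deterministically (so the attack sums only over $j \in [k-1]$), but this is a technical refinement of exactly the plan you outlined.
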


We can now come back to \eqref{eq:lb-reduction} and compute the supremum over $\rho$ after plugging in the lower bound in Lemma~\ref{lem:subclass-lb}. Since $\rho \geq \sqrt{1-\gamma^2}$, plugging in this value for $\rho$ in Lemma~\ref{lem:subclass-lb}, and from the minimax risk lower bound on $\twostgprb$ in \eqref{eq:lb-reduction}, we obtain the result in Theorem~\ref{thm:single-task-lb}. 

\end{proof}

    \subsubsection{Proof of Lemma\texorpdfstring{~\ref{lem:subclass-lb}}{}}

\begin{proof}
    
The proof for the subclass lower bound relies upon re-parameterizing the problem instance as a $k$-dimensional linear regression in the problem, but now in the column span of $\Bpub$. 

Let the worst case in instance in $\twostgprbsc$ be $\theta = B {\alpha}_{t+1}$, where $\|\Bpub^\top \theta\|_2 = \rho$. 
We shall derive a low-dimensional linear regression problem posed by the the unknown worst case instance $\theta$, and the projected inputs: $x \mapsto \Bpub^\top x$. 
Recall that the joint data distribution for private samples is given by:
\begin{align}
    x \;\; &\sim \;\; \cN (0, \;I_d), \\
    y \mid x  \;\; &\sim \;\; \cN(x^\top \theta, \; \sigma^2) 
\end{align}
Additionally, we also recall that the learning algorithm is given $n_2$ private i.i.d. samples from the above distribution $\cS \eqdef  \{(x_i, y_i)\}_{i=1}^n$. In addition, it is also given a rank $k$ matrix with orthonormal columns: $\Bpub \in \mathrm{Gr}_{k,d}(\bR)$ that is close to the unknown low rank subspace $B$, i.e., 
    $\sin\theta(\Bpub, B) \leq \gamma \;\; \implies \;\; \|(I_d-\Bpub \Bpub^\top) B\|_2 \leq \gamma.$
Next, we write each sample in $\cS$ 
in terms of the projection of the regression vector $B\alpha$ on to the column span of $\Bpub$, when the input $x$ is also projected in the following way: $x \mapsto \Bpub x$:
\begin{align}
    z \; & \sim \; \cN(0, \sigma^2) \nonumber \\
    y \; &= \; x^\top B \alpha + z; \;\;   \nonumber \\
     \; &= \; x^\top ((\Bpub {\Bpub}^\top) B \alpha + (I_d - \Bpub {\Bpub}^\top)B \alpha ) + z; \;\;   \nonumber \\
     \; &= \; x^\top (\Bpub {\Bpub}^\top B \alpha)   +  x^\top (I_d - \Bpub {\Bpub}^\top)B \alpha  + z; \;\;   \nonumber \\
     \; &= \; (x^\top \Bpub) \hat \alpha   +  x^\top (I_d - \Bpub {\Bpub}^\top)B \alpha  + z, \label{eq:redefined-model-2}
\end{align}
where $\hat{\alpha} \coloneqq {\Bpub}^\top B \alpha$ is a $k-$dimensional vector in the column span of the given subspace $\Bpub$.

 For any output $M(\cS, \Bpub)$ for an algorithm in $\twostgalg$, from the independence of the two projections: $\Bpub\Bpub^\top x$ and $(I_d - \Bpub\Bpub^\top) x$ argued in Lemma~\ref{lemma:residual-indep}.:
\begin{align}
    &\bE_{\cS \mid B, {\alpha}} \bE_{(x, y) \mid B, {\alpha}_{t+1}} (y - M(\cS, \Bpub)^\top x)^2 \nonumber \\
    &\quad\quad\quad= \bE_{\cS \mid B, {\alpha}} \bE_{x} \bE_{\eta} (x^\top{\Bpub\Bpub^\top{\theta}} + x^\top{(I_d - \Bpub\Bpub^\top ){\theta}}  + \eta - M(\cS, \Bpub)^\top x)^2 \nonumber \\
    & \quad\quad\quad = \sigma^2 +  \|(I_d - \Bpub\Bpub^\top){\theta} \|_2^2 + \bE_{\cS \mid B, {\alpha}} \bE_{x} \bE_{\eta} (x^\top{\Bpub\Bpub^\top{\theta}}  - M(\cS, \Bpub)^\top x)^2 \label{eq:lb-reduction-2}
\end{align}
Since, the norm of  $\theta$ in the nullspace of $\Bpub$ can be chosen without affecting the hardness of the above rejection problem, the worst case problem instance will maximize the additive error by picking any component along the null space (note that direction along null space does not impact the regression error) that has the maximum norm of $1-\rho^2$ (recall that for any $\theta \in \twostgprbsc$, $\|\theta\|_2 \leq 1$).  
Now, from \eqref{eq:redefined-model-2} it follows that the i.i.d. samples in $\cS$ for the worst case instance are drawn from the following low-dimensional linear regression model: 
\begin{align}
    x \;\;\; &\sim \;\;\; \cN(0, I_d) \nonumber \\
    x^{\Bpub} \;\;\; & \eqdef \;\;\; \; x^\top \Bpub \nonumber \\
    y \mid  x^{\Bpub} \;\;\; &\sim \;\;\; \cN((x^{\Bpub})^\top \hat \alpha,\; \hat{\sigma}^2), \quad \textrm{where}, \;\; \hat{\sigma}^2 = \sigma^2 + 1-\rho^2   \label{eq:low-dim-lr}
\end{align}
From the above model in \eqref{eq:low-dim-lr} and equivalence in \eqref{eq:lb-reduction-2}, we have the following equality for the minimax risk over $\twostgprbsc$.
\begin{align}
    &\inf_{M \; \in \; \twostgalg} \;\;\;\; \sup_{B, \alpha \in \twostgprbsc}  \;\;\;\; \bE_{\cS \mid B, \alpha} \bE_{(x,y) \mid B, \alpha} \;\; (y-M(\cS, \Bpub)^\top  x)^2 \nonumber \\
    & \quad = \inf_{M \; \in \; \twostgalg} \;\;\;\; \sup_{\substack{\hat{\alpha} = {\Bpub}^\top B\alpha, \\ B \alpha \in \twostgprbsc}} 
    \;\;\;\; \bE_{\cS \mid \hat{\alpha} } \bE_{x \mid \hat{\alpha}} \;\; (\hat{\alpha}^\top x^{\Bpub}-M(\cS, \Bpub)^{\top}  {x^{\Bpub}}) + \sigma^2 +  1-\rho^2 \nonumber \\
    & \quad = \inf_{M \; \in \; \twostgalg} \;\;\;\; \sup_{\substack{\hat{\alpha} = {\Bpub}^\top B\alpha, \\ B \alpha \in \twostgprbsc}} 
    \;\;\;\; \bE_{\cS \mid \hat{\alpha} }  \|\hat{\alpha}-M(\cS, \Bpub)\|_2^2 + \sigma^2 +  1-\rho^2,\label{eq:minimax-risk-sc-red}        
\end{align}
where $\cS = \{({x_i^{\Bpub}}, y_i)\}_{i=1}^{n_2}$ and $y_i = \hat{\alpha}^\top x^{\Bpub}_i + z_i $, where $z_i \sim \cN(0, \sigma^2 + 1 - \rho^2)$.

Our main technique for proving lower bounds for the minimax risk in \eqref{eq:minimax-risk-sc-red} is based on the \emph{tracing adversary} technique proposed by \citep{bun2014fingerprinting}. Next, we prove that there exists a prior over the effective regression vector $\hat \alpha$, and a tracing attack that is successful in recovering an element of the i.i.d. sampled dataset $\cS$, in expectation over the prior and the dataset. Consider the following tracing attack:
\begin{equation}
    \label{eq:attack}
	A_{\hat \alpha} ((x^{\Bpub},\; y), \; M(\cS, \hat B)) \;\;  = \;\; (y - (x^{\Bpub})^\top{\hat \alpha}) \; \sum_{j=1}^{k-1} (M(\cS, \hat B)_j - \hat {\alpha}_j) \cdot  {x^{\Bpub}_j} .
\end{equation}
Similar to the tracing attacks for mean estimation problems~\cite{cai2021cost}, Lemma~\ref{lem:attack-success} proves that the attack $A_{\hat \alpha} ((y, x^{\Bpub}), \; M(\cS))$ takes large value when $(x^{\Bpub}, y)$ belongs to $\mathcal{S}$ and small value otherwise. We compare the attack success with estimation error, and show that whenever the estimation error is small, the attack has to be fairly successful. Since, we are only searching over private algorithms where the attack takes small values, this yields a lower bound on the estimation error.

The minimax lower bound stated in Lemma~\ref{lem:subclass-lb}, i.e.,  the lower bound for the minimax risk over subclass $\twostgprbsc$ (for a fixed $\rho$), is given by the summation over two terms:  the statistical lower bound and a second term implied by the tracing attack sucess lower bound stated in Lemma~\ref{lem:attack-success} (proven later).

\begin{lemma}\label{lem:attack-success}
	For any fixed $0 < \sigma$, $\sqrt{1-\gamma^2} \leq  \rho \leq 1$, $(B, \alpha)$ satisfying $\sin \; \theta(\hat{B}, B) \leq \gamma$, and $\|\hat{\alpha}\|_2 = \|{\Bpub}^\top B \alpha\|_2 =\rho \leq 1$, let $(x^{\Bpub}, y)$ be an i.i.d. sample (and $\cS$ a dataset of $n_2$ i.i.d. samples) drawn from the distribution defined in \eqref{eq:low-dim-lr}.
Then, for every $(\varepsilon, \delta)$-differentially private estimator $M$ that takes as input $\cS, \Bpub$  and satisfies $\bE_{\cS \mid B, \alpha, \sigma, \rho}\|M(\cS, \hat{B}) - \hat{\alpha}\|_2^2 = o(1)$, for every $\hat{\alpha}$, the following are true:
\begin{enumerate}

    \item For each $i \in [n]$, let $\cS_i'$ denote the data set obtained by replacing $(x^{\Bpub}_i, y_i)$ in $\cS$ with an independent copy from the distribution in \eqref{eq:redefined-model-2}, then $\bE A_{\hat{\alpha}}((x^{\Bpub}_i, y_i), M(\cS_i'))  = 0$ and
    \begin{align*}
        \bE\;\; |A_{\hat{\alpha}}((x^{\Bpub}_i, y_i), M(\cS_i', \hat{B}))| \leq (\sqrt{\sigma^2 + 1 -\rho^2}) \cdot \sqrt{\bE\|M(\cS, \hat{B}) - \hat \alpha\|_2^2}.
    \end{align*}
    \item There exists a prior distribution of $\pi = \pi(\hat \alpha)$ supported over $\hat{\alpha} \in \bR^k$ such that $\hat{\alpha}=\rho$, and
    \begin{align*}
        \sum_{i \in [n]}\bE_{\hat{\alpha}\sim\pi}\bE_{\cS\mid \hat{\alpha}, \rho, \sigma} A_{\hat{\alpha}}((x^{\Bpub}_i, y_i), M(\cS_i, \hat{B})) \;\; \gtrsim \;\; (\sigma^2 + 1- \rho^2) \cdot (k-1).
    \end{align*}
\end{enumerate}
\end{lemma}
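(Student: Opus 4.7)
}

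For Part 1, the plan is to exploit two independences. Write $z_i \coloneqq y_i - (x_i^{\Bpub})^\top \hat\alpha$, which by the reduced regression model \eqref{eq:low-dim-lr} is $\cN(0, s^2)$ with $s^2 = \sigma^2 + 1 - \rho^2$, and is independent of $x_i^{\Bpub}$. Under the replacement dataset $\cS_i'$, the point $(x_i^{\Bpub}, y_i)$ is also independent of $M(\cS_i', \hat B)$. Factor the attack
\[
A_{\hat\alpha}((x_i^{\Bpub}, y_i), M(\cS_i', \hat B)) \;=\; z_i \cdot \sum_{j=1}^{k-1}(M(\cS_i', \hat B)_j - \hat\alpha_j)\, x_{i,j}^{\Bpub},
\]
so that $\bE A_i' = \bE[z_i]\cdot \bE[\cdot] = 0$. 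For the modulus, I would apply Cauchy--Schwarz twice: first to separate $|z_i|$ from the linear combination in $x_i^{\Bpub}$, and then, conditioning on $M(\cS_i',\hat B)$ (which is independent of $x_i^{\Bpub}$) and using that the coordinates of $x_i^{\Bpub}$ are i.i.d.\ $\cN(0,1)$, bound the second moment of the sum by $\|M(\cS_i',\hat B) - \hat\alpha\|_2^2$. Exchangeability of $\cS$ and $\cS_i'$ then lets me replace $\cS_i'$ with $\cS$ inside the squared-error expectation, yielding the claim.

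For Part 2, the plan is to construct the prior $\pi$ as essentially Gaussian and apply Stein's lemma twice (once in $y_i$, once in $\hat\alpha_j$). Concretely, let $\hat\alpha_j \stackrel{\mathrm{iid}}{\sim} \cN(0, \tau^2)$ for $j=1,\ldots,k-1$ with $\tau^2 = \rho^2/(k-1)$, and fix $\hat\alpha_k$ deterministically (with a thin-shell conditioning step discussed below so that $\|\hat\alpha\|_2 = \rho$). Since $y_i \mid \hat\alpha, x_i^{\Bpub} \sim \cN((x_i^{\Bpub})^\top\hat\alpha, s^2)$, Stein's lemma (Lemma~\ref{lem:stein}) in the $y_i$-direction gives, for each $i$,
\[
\bE[A_i \mid \hat\alpha, x^{\Bpub}] \;=\; s^2\, \bE\Big[\sum_{j=1}^{k-1}\tfrac{\partial M_j}{\partial y_i}\, x_{i,j}^{\Bpub} \;\Big|\; \hat\alpha, x^{\Bpub}\Big].
\]
Summing over $i$ and using the chain rule with $\partial y_i/\partial \hat\alpha_j = x_{i,j}^{\Bpub}$ collapses $\sum_i \partial M_j/\partial y_i \cdot x_{i,j}^{\Bpub}$ into $\partial M_j / \partial \hat\alpha_j$. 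Applying Stein a second time, now in $\hat\alpha_j$, gives $\bE[\partial_{\hat\alpha_j} M_j] = \tau^{-2}\bE[\hat\alpha_j M_j] = 1 + \tau^{-2}\bE[\hat\alpha_j(M_j - \hat\alpha_j)]$. Cauchy--Schwarz across $j=1,\ldots,k-1$ bounds the correction by $\frac{k-1}{\rho}\sqrt{\bE\|M-\hat\alpha\|_2^2}$, which is $o(k-1)$ by the hypothesis $\bE\|M - \hat\alpha\|_2^2 = o(1)$ (and $\rho = \Theta(1)$). Hence $\sum_j \bE[\partial_{\hat\alpha_j} M_j] \geq (1-o(1))(k-1)$, and multiplying by the $s^2$ from the first Stein step yields $\sum_i \bE A_i \gtrsim s^2 (k-1) = (\sigma^2 + 1 - \rho^2)(k-1)$.

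\paragraph{Main obstacle.} The trickiest step is that an i.i.d.\ Gaussian prior does not exactly satisfy the norm constraint $\|\hat\alpha\|_2 = \rho$ required by $\Theta_\rho(\Bpub)$. I would handle this by conditioning the Gaussian prior on the thin-shell event $E = \{\,\big|\|\hat\alpha\|_2^2 - \rho^2\big| \le \rho^2/\sqrt{k-1}\,\}$, which has probability bounded away from zero by standard $\chi^2$ concentration, and verifying that (i) both Stein identities survive the conditioning up to $o(1)$ correction, and (ii) the final conclusion is preserved since the $\rho$ entering the bound can be rescaled by a factor $1 \pm o(1)$ which is absorbed by adjacent $\rho$-values in the subclass decomposition of $\Theta(\Bpub, \gamma)$. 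A secondary technical concern is that $M$ need not be differentiable in $y_i$ or $\hat\alpha_j$; this is resolved by the standard mollification argument, convolving $M$ with a vanishingly small Gaussian so Stein's identity applies, then passing to the limit using dominated convergence once the claim is established for the smoothed estimator.
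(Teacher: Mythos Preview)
Your plan for Part 1 is exactly the paper's argument: independence of $z_i$ from $x_i^{\Bpub}$ and of $(x_i^{\Bpub},y_i)$ from $M(\cS_i',\hat B)$, then Jensen/Cauchy--Schwarz and $\bE[x^{\Bpub}(x^{\Bpub})^\top]=I_k$.

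For Part 2 the overall skeleton (score identity, then Stein in the prior, then Cauchy--Schwarz to control the cross term by $o(1)$) also matches the paper, but the paper executes the two steps you flag as obstacles more cleanly, and it is worth knowing how.

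\emph{Score step without differentiating $M$.} Instead of Stein in $y_i$ followed by the chain rule $\sum_i\partial_{y_i}M_j\,x_{i,j}^{\Bpub}=\partial_{\hat\alpha_j}M_j$, the paper simply recognizes $z_i\,x_{i,j}^{\Bpub}=\hat\sigma^2\,\partial_{\hat\alpha_j}\log p(y_i\mid x_i^{\Bpub},\hat\alpha)$ and applies the log-derivative trick to get
\[
\sum_{i}\bE_{\cS\mid\hat\alpha}\bigl[A_{\hat\alpha,i}\bigr]
=\hat\sigma^2\sum_{j=1}^{k-1}\partial_{\hat\alpha_j}\,\bE_{\cS\mid\hat\alpha}\bigl[M(\cS,\hat B)_j\bigr].
\]
The point is that the function being differentiated is $\hat\alpha\mapsto\bE_{\cS\mid\hat\alpha}[M_j]$, an integral of $M_j$ against a Gaussian density in $y$, hence automatically smooth in $\hat\alpha$ regardless of how rough $M$ is. This removes your mollification concern entirely; no smoothing of $M$ is needed.

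\emph{Prior without thin-shell conditioning.} Rather than i.i.d.\ Gaussians plus a conditioning event (which couples the coordinates and perturbs the score), the paper draws $\hat\alpha_1,\ldots,\hat\alpha_{k-1}$ independently from a one-dimensional \emph{truncated} Gaussian with variance $\rho^2/(k-1)$ truncated at $\pm\rho/\sqrt{k-1}$, and then sets $\hat\alpha_k$ deterministically (up to sign) to absorb the residual norm. This keeps the first $k-1$ coordinates independent, so Stein in each $\hat\alpha_j$ uses the one-dimensional density with $\pi_j'/\pi_j=-(k-1)\hat\alpha_j/\rho^2$ exactly as in the Gaussian case on the interior. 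The subsequent computation $\sum_j\bE_\pi[\hat\alpha_j^2]\cdot(k-1)/\rho^2\gtrsim k-1$ and the Cauchy--Schwarz control of the cross term go through as you wrote, without any thin-shell correction to track. Your thin-shell route is not wrong in spirit, but proving that the two Stein identities survive the conditioning up to $o(1)$ is genuinely more work than the paper's per-coordinate truncation, and you lose coordinate independence along the way.
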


 From Lemma~\ref{lem:attack-UB} and from the first part of Lemma~\ref{lem:attack-success},
	\begin{align*}
		\sum_{i \in [n]} \bE_{ \cS|\hat {\alpha}} A_{\hat \alpha} ((x_i^{\Bpub},\; y_i), \; M(\cS, \hat{B})) \;\; \leq \;\; 2n_2\varepsilon\sqrt{\sigma^2 + 1-\rho^2}\sqrt{\bE_{ \cS|\hat {\alpha}}\|M( \cS, \hat B) - \hat {\alpha}\|_2^2}  \\ 
  \quad\quad\quad\quad+ 2n_2\delta T + n_2\int_T^\infty \Prob\left(|A_{\hat \alpha} ((x_i^{\Bpub},\; y_i), \; M(\cS, B))| > t \right).
	\end{align*}
For the tail probability term,
\begin{align*}
	\Prob\left(|A_{\hat \alpha} ((x_i^{\Bpub},\; y_i), \; M(\cS, \hat B))| > t \right) &= \Prob\left(\left|y_i -  x_i^\top\hat {\alpha}\right| \left|
    \sum_{j=1}^{k-1} (M(\cS, \hat B)_j - \hat {\alpha}_j) \cdot  {x^{\Bpub}_j} \right| >  t\right)    \\
 & \leq \Prob\left(\left|y_i -  x_i^\top\hat {\alpha}\right| \|\hat{\alpha}\|\|x^{\hat B}\| > t\right) \\
 & \leq \Prob\left(\left|y_i -  x_i^\top\hat {\alpha}\right| \sqrt{k} > t\right) \leq 2\exp\left(\frac{-t^2}{2k(\sigma^2+1-\rho^2)}\right).
\end{align*}
By choosing $T = \sqrt{2(\sigma^2+1-\rho^2)k\log(1/\delta)}$, we obtain
\begin{align*}
	\sum_{i \in [n]} \bE_{ \cS|\hat {\alpha}} A_{\hat \alpha} ((x_i^{\Bpub},\; y_i), \; M(\cS, \hat B)) \;\; &\lsim \;\; 2n_2\varepsilon\sqrt{\sigma^2 + 1 -\rho^2}\sqrt{\bE_{ \cS|\hat {\alpha}}\|M( \cS,  \hat B) - \hat {\alpha}\|_2^2}\\
 &\quad\quad+ \cO\paren{n_2\delta\sqrt{\sigma^2 + 1 -\rho^2) k\log(1/\delta)}}.
\end{align*}
Now plugging in the second part of Lemma \ref{lem:attack-success} gives us
\begin{align*}
	(\sigma^2 + 1-\rho^2) k \;\; \leq \;\; & \bE_{ \pi} \sum_{i \in [n]} \bE_{\hat \alpha \sim \pi} \brck{\bE_{ \cS|\hat {\alpha}} A_{\hat \alpha} ((x_i^{\Bpub},\; y_i), \; M(\cS, \hat B))} \\ 
    & \quad \quad \lsim  2n_2\varepsilon\sqrt{\sigma^2 + 1 -\rho^2}\sqrt{\bE_{ \pi}\bE_{ \cS|\hat {\alpha}}\|M( \cS) - \hat {\alpha}\|_2^2} + \cO\paren{n_2\delta\sqrt{(\sigma^2 + 1 -\rho^2){ k\log(1/\delta)}}}.
\end{align*}
Since $\delta < n^{-(1+\omega)}$ for $\omega > 0$, for every $(\varepsilon, \delta)$-differentially private $M$ we have
\begin{align}
	\bE_{ \pi}\bE_{ \cS|\hat {\alpha}}\|M( \cS) - \hat {\alpha}\|_2^2 \;\; \gtrsim \;\; (\sigma^2 + 1 - \rho^2) \frac{k^2}{n_2^2\varepsilon^2} \label{eq:priv-error}.
\end{align}

Adding the statistical lower bound of $\frac{k(\sigma^2+1-\rho^2)}{n_2}$ to the lower bound  from \eqref{eq:priv-error}, and from \eqref{eq:minimax-risk-sc-red}, we complete the proof of Lemma~\ref{lem:subclass-lb}.

\end{proof}

\subsubsection{Proof of Lemma~\ref{lem:attack-success}}

\begin{proof}
Let us begin by looking at $\bE A_{\hat{\alpha}}((x^{\Bpub}_i, y_i), M(\cS_i', \Bpub))$, where we use the fact that $y_i - (x^{\Bpub}_i)^\top \hat \alpha$ is independent of $x^{\Bpub}_i$, and $\bE[y_i - (x^{\Bpub}_i)] = 0$:
\begin{align*}
    & \bE A_{\hat{\alpha}}((x^{\Bpub}_i, y_i), M(\cS_i', \Bpub)) \\
    & \quad \; =\; \bE \brck{(y_i - (x^{\Bpub}_i)^\top{\hat \alpha}) \sum_{j=1}^{k-1} (M(\cS_i^{'}, \Bpub)_j - \hat {\alpha}_j)  {x^{\Bpub}_{i,j}}) }   \\
    & \quad \; =   \bE \brck{(y_i - (x^{\Bpub}_i)^\top{\hat \alpha})}  \sum_{j=1}^{k-1} \bE [M(\cS_i^{'}, \Bpub) - \hat {\alpha}_j]    \bE[ {x^{\Bpub}_{i,j}}]     \\
    & \quad \; =   0 \cdot  \sum_{j=1}^{k-1} \bE [M(\cS_i^{'}, \Bpub) - \hat {\alpha}_j]    \bE[ {x^{\Bpub}_{i,j}}] = 0     \\
\end{align*}

This proves the first claim about the expected value of the attack when the datapoint is not a part of the training set, i.e., $\bE A_{\hat{\alpha}}((x^{\Bpub}_i, y_i), M(\cS_i')) = 0$. Next, we look at the expected magnitude of the same random variable and upper bound it with a term that scales with the estimation error. 

\begin{align*}
     & \bE |A_{\hat{\alpha}}((x^{\Bpub}_i, y_i), M(\cS_i', \Bpub))|  \\
     & \quad \; \leq \; \sqrt{\bE (A_{\hat{\alpha}}((x^{\Bpub}_i, y_i), M(\cS_i')))^2} \quad \textrm{(Jensen's inequality)} \\
     & \quad \; \leq \; \sqrt{\bE \brck{\paren{(y - (x^{\Bpub})^\top{\hat \alpha}) \sum_{j=1}^{k-1} (M(\cS, \Bpub)_j - \hat {\alpha}_j) \cdot  {x^{\Bpub}_j}}^2}} \\
     & \quad \; \leq \; \sqrt{\bE \brck{\paren{\big\langle M(\cS_i^{'}, \Bpub) - \hat {\alpha}, \; (y_i - (x^{\Bpub}_i)^\top{\hat \alpha}) x^{\Bpub}_i \big\rangle^2}}} \\ 
     & \quad \; = \; \sqrt{\bE[ ((y_i - (x^{\Bpub}_i)^\top{\hat \alpha}))^2 \cdot (M(\cS_i^{'}, \Bpub) - \hat {\alpha})^\top \bE[(x^{\Bpub}_i)((x^{\Bpub}_i))^\top] (M(\cS_i^{'}, \Bpub) - \hat {\alpha})]} \quad \textrm{(independence)}  \\
     & \quad \; = \sqrt{\bE\brck{ ((y_i - (x^{\Bpub}_i)^\top{\hat \alpha}))^2 \cdot (M(\cS_i^{'}, \Bpub) - \hat {\alpha})^\top I_k (M(\cS_i^{'}, \Bpub) - \hat {\alpha})}} \quad (\textrm{since } \Bpub^\top \Bpub = I_k)  \\
     & \quad \; = \sqrt{\sigma^2 + (1-\rho^2)} \cdot  \sqrt{ \bE \|M(\cS_i^{'}, \Bpub) - \hat {\alpha}) \|_2^2 }, \quad \textrm{(independence)}   
\end{align*}
where the last inequality uses the following derivation:
\begin{align*}
    \bE\brck{ ((y_i - (x^{\Bpub}_i)^\top{\hat \alpha}))^2} &= \hat\sigma^2 = \sigma^2 + \|(I_d - \Bpub \Bpub^\top)B\alpha\|_2^2 \\
    &=\sigma^2 + 1 - \rho^2
\end{align*}
This completes the proof for the first part of the Lemma. For the second part we will begin by constructing a convenient prior for $\hat \alpha$. 

Note that $\hat{\alpha}$ can take any value in the column span of $\Bpub$ if the adversary has complete control over $B$ and $\alpha$. Thus, defining a prior over $\hat{\alpha}$ would involve defining a prior over the column span of $\Bpub$ such that $\|\hat \alpha\|_2 = \rho$. We define a sample from the prior $\pi$ as a multi step procedure:
\begin{enumerate}
    \item For all $i \in [k-1]$, sample $\omega_i$ from the truncated Gaussian, with mean $0$, variance $\nicefrac{\rho^2}{(k-1)}$, and truncation at points $\nicefrac{-\rho}{\sqrt{k-1}}$ and $\nicefrac{\rho}{\sqrt{k-1}}$.
    \item Set $\omega_k =  \pm \sqrt{1 - \sum_{i \in [k-1]} \omega_i^2}$ with equal probability for either sign.
    \item Now, set $B\alpha = \sum_{i \in [k]} \omega_i \cdot v_{i}$, where $v_{i}$ is the $i^{\mathrm{th}}$ column of $\Bpub$. Consequently, $\hat \alpha = {\Bpub}^\top B\alpha = [\omega_1, \omega_2, \ldots, \omega_k]^\top$. 
\end{enumerate}

For the second part of the claim we need to lower bound $\sum_{i \in [n]} \bE_{\hat \alpha \sim \pi} \bE \brck{A_{\hat \alpha} ((y_i, x_i^{\Bpub}), \; M(\cS)) \bigl\vert \hat {\alpha}}$ which we can decompose over co-ordinates in the following way:
\begin{align}
    & \sum_{i \in [n]}  \bE_{\hat \alpha \sim \pi} \bE \brck{A_{\hat \alpha} ((y_i, x_i^{\Bpub}), \; M(\cS, \Bpub)) \bigl\vert \hat {\alpha}} \nonumber \\
    \quad &= \sum_{j \in [k-1]} \bE_{\hat \alpha \sim \pi}  \bE \brck{M(\cS, \Bpub)_j \paren{\sum_{i \in [n]}  (y_i - {\hat \alpha}^\top x_i^{\Bpub} ) x^{\Bpub}_{i,j}  }\bigl\vert \hat {\alpha}} \nonumber  \\ 
    \quad &= \sum_{j \in [k-1]} \bE_{\hat \alpha \sim \pi} \bE \brck{ M(\cS, \Bpub)_j \frac{\partial}{\partial \hat{\alpha}_j} \brck{\log \; p(\cS \mid B, \hat \alpha, \sigma)} \; (\sigma^2 + 1-\rho^2) \bigl\vert \hat {\alpha}} \nonumber \\
    \quad &= (\sigma^2 + 1-\rho^2) \cdot \sum_{j \in [k-1]} \bE_{\hat \alpha \sim \pi} \bE \brck{ M(\cS, \Bpub)_j \frac{\partial}{\partial \hat{\alpha}_j} \brck{\log \; p(\cS \mid B, \hat \alpha, \sigma)}  \bigl\vert \hat {\alpha}} \nonumber \\
    \quad &= (\sigma^2 + 1-\rho^2) \cdot \sum_{j \in [k-1]} \bE_{\hat \alpha \sim \pi} \frac{\partial}{\partial \hat{\alpha}_j} \bE \brck{ M(\cS, \Bpub)_j }, \label{eq:lb-4}
\end{align}
where the final equation uses the log-derivative trick.

Next, we focus on $\bE_{\hat \alpha \sim \pi} \brck{\frac{\partial}{\partial \hat{\alpha}_j} \bE[M(\cS, \Bpub)_j]}$ for any $j \in [k-1]$. Recall, that for any dimension $j \in [k-1]$, the prior $\pi$ draws a sample from the Gaussian $\cN(0, \nicefrac{\rho^2}{k-1})$, truncated at $-\nicefrac{\rho}{\sqrt{k-1}}, \nicefrac{\rho}{\sqrt{k-1}}$ independently. 
We will now apply Stein's Lemma (see Lemma~\ref{lem:stein}) for the term $\bE_{\hat \alpha \sim \pi} \brck{\frac{\partial}{\partial \hat{\alpha}_j} \bE[M(\cS, \Bpub)_j]}$.

 Denoting $\hat{\alpha}_{-j}$ as the set $\{\hat{\alpha}_j\}_{j=1}^k \setminus \{\hat{\alpha}_j\}$, and $\pi_j$ as the marginal prior over $j^{\mathrm{th}}$ dimension of $\hat{\alpha}_j$, we can lower bound $\bE_{\hat \alpha \sim \pi} \brck{\frac{\partial}{\partial \hat{\alpha}_j} \bE[M(\cS)_j]}$ in the following way:
 \begin{align}
     \bE_{\hat \alpha \sim \pi} \brck{\frac{\partial}{\partial \hat{\alpha}_j} \bE[M(\cS, \Bpub)_j]} 
     &= \bE_{\hat{\alpha}_{-j}} \brck{\bE_{\hat{\alpha}_{j}} \frac{\partial}{\partial \hat{\alpha}_j} \bE_\cS[M(\cS, \Bpub)_j] \; \mid \hat{\alpha}_{-j} } \nonumber \\ 
    &= \bE_{\hat{\alpha}}  \brck{-\frac{\pi_j'(\hat{\alpha}_j)}{\pi_j(\hat{\alpha}_j)} \bE_\cS[M(\cS, \Bpub)_j]} \nonumber  \\ 
    &= \bE_{\hat{\alpha}}  \brck{-\frac{\pi_j'(\hat{\alpha}_j)}{\pi_j(\hat{\alpha}_j)} \bE_\cS[M(\cS, \Bpub)_j - \hat{\alpha}_j + \hat{\alpha}_j]} \nonumber \\ 
    &\geq \bE_{\hat{\alpha}}  \brck{-\hat{\alpha}_j\frac{\pi_j'(\hat{\alpha}_j)}{\pi_j(\hat{\alpha}_j)}}  - \bE_{\hat{\alpha}} \brck{ |\nicefrac{\pi_j'(\hat{\alpha}_j)}{\pi_j(\hat{\alpha}_j)}| \cdot \bE_\cS[|M(\cS, \Bpub)_j - \hat{\alpha}_j|]} \label{eq:lb-1}
 \end{align}

Next, we use the density of the truncated Normal:
\begin{align*}
    \pi_j(\hat{\alpha}_j) = \frac{\exp{\paren{-\frac{(k-1)}{2\rho^2}\cdot \hat{\alpha}_j^2}}}{\sqrt{2\pi}\nicefrac{\rho}{\sqrt{k-1}}\cdot(\Phi(1) - \Phi(-1))},
\end{align*}
where $\Phi(\cdot)$ is the CDF function for a standard Normal distribution. Thus, $\frac{\pi_j'(\hat{\alpha}_j)}{\pi_j(\hat{\alpha}_j)} = -\frac{(k-1)}{\rho^2}\hat{\alpha}_j$.

Substituting the above and applying Cauchy-Schwarz followed by Jensen's inequality we get, 
\begin{align}
    & \sum_{j=1}^{k-1} \;\;\;\; \bE_{\hat{\alpha}} \brck{ |\nicefrac{\pi_j'(\hat{\alpha}_j)}{\pi_j(\hat{\alpha}_j)}| \cdot \bE_\cS[|M(\cS, \Bpub)_j - \hat{\alpha}_j|] } \nonumber \\
    & \quad\quad = \nicefrac{(k-1)}{\rho^2} \cdot \bE_{\hat{\alpha}} \;\; \brck{ \sum_{j=1}^{k-1} \;\;  |\hat{\alpha}_j| \cdot \bE_\cS[|M(\cS, \Bpub)_j - \hat{\alpha}_j|]} \nonumber \\  
    & \quad\quad \leq \;\; \nicefrac{(k-1)}{\rho^2} \cdot    \sqrt{\bE_{\hat{\alpha}} \brck{\sum_{j\in[k-1]} \hat{\alpha}_j^2 } \cdot \bE_{\hat{\alpha}}  \brck{\sum_{j\in[k-1]} \paren{\bE_\cS \brck{M(\cS, \Bpub)_j - \hat{\alpha}_j}}^2 } } \nonumber \\  
    & \quad\quad \leq \;\;  \nicefrac{(k-1)}{\rho^2} \cdot   \sqrt{\bE_{\hat{\alpha}} \|\hat{\alpha}\| ^2 }  \sqrt{\bE_{\hat{\alpha}} \bE_\cS \|{M(\cS, \Bpub)_j - \hat{\alpha}}\|^2} \label{eq:lb-2}
\end{align}

From directly applying the density of the truncated Normal distribution we get, 
\begin{align}
\sum_{j=1}^{k-1} \bE_{\hat{\alpha}}  \brck{-\hat{\alpha}_j\frac{\pi_j'(\hat{\alpha}_j)}{\pi_j(\hat{\alpha}_j)}} \;\; = \;\; \nicefrac{(k-1)}{\rho^2} \cdot  \bE_{\hat{\alpha}} \sum_{j\in [k-1]} \hat{\alpha}_j^2 \label{eq:lb-3}
\end{align}

Plugging \eqref{eq:lb-3}, \eqref{eq:lb-2} into \eqref{eq:lb-1}, and using \eqref{eq:lb-4} we get,
\begin{align}
& \sum_{i \in [n]}  \bE_{\hat \alpha \sim \pi} \bE \brck{A_{\hat \alpha} ((y_i, x_i^{\Bpub}), \; M(\cS)) \bigl\vert \hat {\alpha}} \nonumber \\
&\;\; \geq \;\; \frac{(\sigma^2 + 1 - \rho^2)}{\nicefrac{\rho^2}{(k-1)}} \cdot \paren{ \bE_{\hat{\alpha}\sim \pi} \sum_{j=1}^{k-1} \hat{\alpha}_j^2 - \sqrt{\bE_{\hat{\alpha} \sim \pi} \bE_{\cS \mid \hat{\alpha}} \|M(\cS, \Bpub) - \hat{\alpha}\|_2^2 } \sqrt{\bE_{\hat{\alpha} \sim \pi} \|\hat{\alpha}\|_2^2} } \label{eq:lb-5} 
\end{align}

Note that $\bE_{\hat{\alpha}\sim \pi} \sum_{j=1}^{k-1} \hat{\alpha}_j^2  = \rho^2$ by construction of the prior $\pi$ and $\bE_{\hat{\alpha} \sim \pi} \bE_{\cS \mid \hat{\alpha}} \|M(\cS, \Bpub) - \hat{\alpha}\|_2^2 = o(1)$ by assumption. Thus, $\sum_{i \in [n]}\bE_{\pi}\bE_{\cS\mid B, \Bpub, \alpha, \sigma}A_{\hat{\alpha}}((x^{\Bpub}_i, y_i), M(\cS_i, \Bpub)) \;\; \gtrsim \;\; (\sigma^2 + 1- \rho^2) \cdot (k-1)$, which completes the proof of the second claim in Lemma~\ref{lem:attack-success}.

\end{proof}

\end{document}